\documentclass[11pt]{article}
\usepackage{amsmath}
\usepackage{amssymb}
\usepackage{amsfonts}
\usepackage{amsthm}
\usepackage{bm} 
\usepackage{float}
\usepackage{graphicx} 
\usepackage{booktabs} 
\usepackage{hyperref} 
\usepackage{enumitem} 
\usepackage{mathtools}
\usepackage{multirow} 
\usepackage{array} 
\usepackage[affil-it]{authblk} 
\usepackage{stmaryrd}
\usepackage[sort&compress,numbers,square]{natbib}

\usepackage{fullpage} 
\usepackage[affil-it]{authblk}
\usepackage{bm} 
\newcommand{\SigmaDiag}{\bm{\Sigma}_{\text{diag}}} 

\hypersetup{ colorlinks=true
,linkcolor=blue, urlcolor=blue, citecolor=blue }

\newtheorem{definition}{Definition}
\newtheorem{lemma}{Lemma}
\newtheorem{proposition}{Proposition}
\newtheorem{corollary}{Corollary}
\newtheorem{remark}{Remark}
\newtheorem{theorem}{Theorem}

\begin{document}
\title{Localized LoRA-MoE: Block-wise Low-Rank Experts\\
With Adaptive Routing} 


\author[1]{Babak Barazandeh\thanks{Corresponding authors: $^1$\texttt{bbarazandeh@fortinet.com}, $^4$\texttt{gmichail@stat.ucla.edu}.}}
\author[2]{Subhabrata Majumdar}
\author[3]{Vinay Prithyani}
\author[4]{George Michailidis}

\affil[ ]{\small $^1$Fortinet, $^2$Indian Institute of Management Bangalore, $^3$Citadel Securities, $^4$UCLA}

\date{} 
\maketitle 
\begin{abstract}
Large Language Models (LLMs) and high-dimensional perception networks increasingly rely on parameter-efficient fine-tuning (PEFT) to adapt to diverse operational contexts. However, standard methods like LoRA are structurally limited by a monolithic bottleneck, making them highly susceptible to \emph{gradient warfare}. Interleaved multi-task streams may trigger destructive optimization feedback, collapsing adapter weights into unspecialized averages. While recent spatial partitioning methods have introduced block-wise isolation, they remain trapped in static topologies, unable to adapt to dynamic task-switching or environmental sensor failure. In this work, we introduce \textbf{Localized LoRA-MoE}, a unified framework that fuses localized spatial blocking with dynamic, context-conditioned routing. We propose and evaluate two novel architectural paradigms: \textbf{Block-Wise LoRA-MoE (Centralized Macro-Routing)}, which modulates the entire structural grid via a monolithic context signal, and \textbf{Cell-Wise LoRA-MoE (Decentralized Micro-Routing)}, which empowers every coordinate cell in the matrix grid with autonomous, localized expert gating. Through a comprehensive suite of benchmarks—ranging from high-dimensional SVD matrix simulations and real-world tabular transformations to spatial vision perception under sensor degradation—we demonstrate that both architectures resolve optimization deadlocks inherent in static baselines. Our empirical results establish that decentralized cell-level gating achieves complete statistical parity with an omniscient global coordinator, providing a robust ``gradient firewall" that protects surviving pathways from fault-propagated corruption. Our proposals  consistently outperform static baselines, offering a scalable and parameter-efficient solution for dynamic model adaptation across granular coordinate fields and shifting operational regimes.
\end{abstract}
\section{Introduction}

Large Language Models (LLMs) continue to define the state of the art across natural language processing tasks, benefiting from massive pretraining corpora and increasingly sophisticated architectures. In practical applications, post-training customization of LLMs is often necessary to achieve operationally acceptable performance \cite{ding2023peft}. While full-parameter fine-tuning (FFT) yields strong task adaptation, its computational and storage demands scale linearly with model size, making it impractical for many real-world applications operating under business constraints. This challenge has motivated a growing body of work on \emph{parameter-efficient fine-tuning} (PEFT), which seeks to adapt LLMs by training only a small set of additional parameters \cite{lialin2023scaling}.

Among PEFT methods, Low-Rank Adaptation (LoRA)~\cite{hu2022lora} remains particularly influential: it introduces trainable low-rank updates to frozen weight matrices, offering a simple and effective mechanism for downstream adaptation. Subsequent efforts have refined or extended the LoRA formulation in various ways, including improving parameter sharing~\cite{bishare2025}, boosting-based low-rank updates~\cite{zhang2024less}, residual low-rank pathways~\cite{zhao2025lor2c}, adaptive rank selection~\cite{he2025gora}, structured decompositions~\cite{shi2024lold}, singular-value–aware initialization~\cite{sun2024svfit}, and decoupled update parameterizations~\cite{bini2025delora}. Other works explore geometric generalizations such as hyperbolic fine-tuning~\cite{yang2024hyperbolic}, modular or composable LoRA variants~\cite{huang2023lorahub,liu2024vb}, and vector bank–based adaptation strategies~\cite{kopiczko2023vera,mao2024dora}. Comprehensive surveys such as~\cite{yang2025lowrank} further highlight the breadth of modern LoRA-based techniques.

A major structural deficiency of standard LoRA methods is their susceptibility to global gradient distortion under skewed input statistics. Recent research on modified LoRA architectures~\cite[GraLoRA]{jung2026gralora} has isolated a phenomenon termed \emph{channel dominance}, wherein a small subset of outlier activation channels---frequently observed in modern LLM intermediate representations---disproportionately dictates the update direction. Because standard LoRA constrains parameter updates via a singular global bottleneck, these sparse outlier signals inject destructive gradient entanglement across entirely unrelated input-output pathways, degrading adapter representational capacity at expanded ranks. 

To overcome this bottleneck, a complementary line of research investigates spatial matrix partitioning and architectural sub-blocking. MELoRA~\cite{ren2024melora} demonstrates the immediate computational benefits of localizing updates strictly to diagonal subregions. Localized LoRA~\cite{barazandeh2025localized} generalizes this spatial layout into a formal multi-field partitioning framework, constructing a structured $k \times k$ block grid capable of representing both isolated intra-field correlations and rich cross-field off-diagonal transformations. Concurrently, GraLoRA~\cite{jung2026gralora} utilizes uniform structural matrix slicing to restrict the backward path of outlier channels to localized coordinate sub-blocks, effectively shielding the global parameter landscape from signal corruption. Most recently, blockwise paradigms have extended beyond additive low-rank matrices; BoHA~\cite{yu2025blockwise} leverages a blockwise $W_0$-coupled Hadamard product structure across a spatial grid, optimizing single-task accuracy alongside sequential task-retention metrics.

Parallel to these spatial innovations, Mixture-of-Experts (MoE) augmented PEFT architectures have emerged. Systems such as MoKA~\cite{yu2025moka}, MoRAL~\cite{yang2024moral}, and LoRAMoE~\cite{dou2023loramoe,dou2025loramoe} highlight how conditional expert routing can insulate modular weights, enhance task specialization, or preserve core world knowledge during instruction tuning. 

Despite these dual advances, an architectural dichotomy persists. Existing structural partitioning techniques (such as Localized LoRA, GraLoRA, and BoHA) remain strictly \emph{static} topologies. They assume a fixed mapping surface across the full operational stream. When confronted with dynamic, context-driven field adjustments—such as operational task switching or active hardware sensor degradation—these static sub-blocks succumb to severe internal parameter conflicts as competing forward states overwrite shared updates. Conversely, traditional MoE-based LoRA variants utilize a singular, global routing configuration that treats the underlying matrix as an unstructured monolith, remaining completely blind to the localized spatial heterogeneity of the layer. Consequently, no existing framework simultaneously exploits fine-grained spatial block isolation \emph{and} input-conditioned dynamic routing.


\paragraph{Our Contributions.}
The above architectural gap motivates our work. We introduce \textbf{Localized LoRA-MoE}, a unified framework that fuses localized spatial blocking with dynamic context routing, resolving optimization deadlocks by replacing rigid and static adapter topologies. Specifically, we map out and evaluate two distinct architectural paradigms for routed spatial reparameterization:

\begin{itemize}[leftmargin=*,nolistsep]
    \item \textbf{Block-Wise LoRA-MoE (Centralized Macro-Routing)}: This architecture deploys multiple spatial block-matrix experts governed by a centralized, global routing network. The macro-router interprets the overarching input context to dynamically swap or modulate entire structural block topologies simultaneously, ensuring high macro-cohesion across the layer surface.
    \item \textbf{Cell-Wise LoRA-MoE (Decentralized Micro-Routing)}: The ultimate architectural evolution. Instead of relying on a single global coordinator, this approach embeds independent, micro-routing units directly within every individual coordinate cell $(i,j)$ of the structured matrix grid. Each localized sub-block cell is granted complete conditional autonomy to select its own low-rank expert pathway based purely on local feature profiles.
\end{itemize}

Through rigorous parameter-parity benchmarking, we explicitly demonstrate:
\begin{itemize}[leftmargin=*,nolistsep]
    \item \textbf{Spatially-Isolated Multi-Expert Adaptation}: Both configurations isolate parameter paths into distinct sub-matrices, effectively limiting the global gradient entanglement driven by input activation outliers.
    \item \textbf{Mitigation of Cross-Domain Gradient Warfare}: By introducing conditional activation pathways into partitioned spaces, both methods allow models to absorb severe context-driven domain shifts (e.g., streaming multi-task transitions or hardware sensor degradation) without triggering mutual parameter destruction.
    \item \textbf{Decentralized Micro-Gating Parity}: Crucially, our empirical results prove that decentralized, cell-level coordinate routing achieves complete statistical parity with an omniscient global router, demonstrating that localized block autonomy can match global capacity without relying on a rigid, single-point routing bottleneck.
    \item \textbf{Strict Parameter Efficiency}: Despite the vast increase in architectural flexibility, both the macro-routed and micro-routed multi-expert structures are engineered to operate within strict parameter parity bounds matching standard static configurations.
\end{itemize}
Our experiments show that Localized LoRA-MoE—through both Block-Wise Centralized Macro-Routing and Cell-Wise Decentralized Micro-Routing—significantly improves expressive power under matched parameter budgets, offering a compelling new direction for structured, dynamic, and efficient adaptation of large language models across both global semantic shifts and granular coordinate fields.

\section{Methodology}
\label{sec:methods}

In this section, we first contextualize the architectural evolution by revisiting standard global Low-Rank Adaptation (LoRA) and static multi-field localized partitioning. We then introduce our two proposed dynamic architectures: Block-Wise LoRA-MoE and Cell-Wise LoRA-MoE.

\subsection{Preliminaries}

Let $\mathbf{W}_0 \in \mathbb{R}^{d_{\text{out}} \times d_{\text{in}}}$ denote a pre-trained, frozen weight matrix of a dense neural network layer. For an incoming structural activation input token $\mathbf{x} \in \mathbb{R}^{d_{\text{in}}}$, the nominal feed-forward mapping computes an output representation $\mathbf{h}_0 = \mathbf{W}_0 \mathbf{x}$. 

\paragraph{LoRA Formulation.} Standard Low-Rank Adaptation~\cite{hu2022lora} parametrizes the weight update matrix $\Delta \mathbf{W}$ as an unstructured, global low-rank intrinsic bottleneck:
\begin{equation}
    \mathbf{h} = \mathbf{W}_0 \mathbf{x} + \Delta \mathbf{W} \mathbf{x} = \mathbf{W}_0 \mathbf{x} + \frac{\alpha}{r} \mathbf{B} \mathbf{A}^\top \mathbf{x}
\end{equation}
where $\mathbf{A} \in \mathbb{R}^{d_{\text{in}} \times r}$ and $\mathbf{B} \in \mathbb{R}^{d_{\text{out}} \times r}$ are trainable adapter matrices whose rank $r \ll \min(d_{\text{in}}, d_{\text{out}})$, and $\alpha$ is a constant structural scaling hyperparameter. As isolated by GraLoRA~\cite{jung2026gralora}, when $\mathbf{x}$ features skewed activation profiles or localized channel dominance, this monolithic mapping suffers from gradient entanglement across un-correlated channels, leading to representational collapse at expanded ranks. Such optimization pathologies are well-documented in high-dimensional adversarial settings~\cite{barazandeh2019random}.

\paragraph{Static Structural Block-Partitioning.} To isolate gradient fields, Localized LoRA~\cite{barazandeh2025localized} decomposes the input and output spaces into a collection of sub-blocks. Let the input vector $\mathbf{x}$ be partitioned into $K_x$ contiguous semantic fields, and the updated output vector $\tilde{\mathbf{h}}$ be partitioned into $K_y$ contiguous segments:
\begin{equation}
    \mathbf{x} = \begin{bmatrix} \mathbf{x}_{1}^\top & \mathbf{x}_{2}^\top & \dots & \mathbf{x}_{K_x}^\top \end{bmatrix}^\top, \quad \mathbf{x}_j \in \mathbb{R}^{d_{\text{in}}/K_x}
\end{equation}
\begin{equation}
    \tilde{\mathbf{h}} = \begin{bmatrix} \tilde{\mathbf{h}}_{1}^\top & \tilde{\mathbf{h}}_{2}^\top & \dots & \tilde{\mathbf{h}}_{K_y}^\top \end{bmatrix}^\top, \quad \tilde{\mathbf{h}}_i \in \mathbb{R}^{d_{\text{out}}/K_y}
\end{equation}

By defining localized low-rank parameter components for each coordinate grid cell $(i, j)$ such that $\mathbf{A}_{i,j} \in \mathbb{R}^{(d_{\text{in}}/K_x) \times r_b}$ and $\mathbf{B}_{i,j} \in \mathbb{R}^{(d_{\text{out}}/K_y) \times r_b}$, the static unweighted update to an individual output block $i$ aggregates cross-field mappings:
\begin{equation}
    \tilde{\mathbf{h}}_{i} = \sum_{j=1}^{K_x} \mathbf{B}_{i,j} \mathbf{A}_{i,j}^\top \mathbf{x}_j
\end{equation}
While this layout natively isolates spatial pathways, its mapping surface remains entirely static during inference. When applied to multi-task regimes exhibiting continuous distribution shifts, competing gradient trajectories backpropagating through the shared sub-blocks trigger severe optimization deadlock.

\subsection{Proposal I: Block-Wise LoRA-MoE}

Our first proposed architecture introduces context-conditioned variation at the macro structural scale. We introduce a set of $E$ independent, full-grid localized low-rank experts governed by a centralized routing layer. 

A centralized gating network $\mathbf{W}_g \in \mathbb{R}^{E \times d_{\text{in}}}$ observes the complete, global input activation vector $\mathbf{x}$ to compute a unified macro-routing distribution over the expert pool:
\begin{equation}
    g(\mathbf{x}) = \text{Softmax}(\mathbf{W}_g \mathbf{x}) \in \mathbb{R}^E.
\end{equation}

Each expert $e \in \{1, \dots, E\}$ maintains its own complete $K_y \times K_x$ spatial grid of sub-block adapters. The final, block-wise routed update to output block $i$ linearly scales the structural transformations of all experts:
\begin{equation}
    \mathbf{h}_{i}^{\text{Block}} = \sum_{e=1}^E g_e(\mathbf{x}) \left( \sum_{j=1}^{K_x} \mathbf{B}_{i,j}^{(e)} \mathbf{A}_{i,j}^{(e)\top} \mathbf{x}_j \right).
\end{equation}
where $\mathbf{A}_{i,j}^{(e)}$ and $\mathbf{B}_{i,j}^{(e)}$ are the low-rank parameter pairs assigned to coordinate $(i,j)$ within the structural layer of expert $e$. This configuration provides high macro-cohesion, dynamically adjusting the full face of the weight update matrix simultaneously based on global input traits.

\subsection{Proposal II: Cell-Wise LoRA-MoE}

To completely bypass single-point routing dependencies and allow fine-grained adaptation across localized coordinates, we propose a decentralized alternative: Cell-Wise LoRA-MoE. Instead of relying on a centralized gating network, this architecture embeds independent micro-routing blocks directly inside every individual coordinate cell $(i, j)$ of the structural grid.

Each coordinate cell maintains a local pool of $E$ low-rank experts $\big\{(\mathbf{B}_{i,j}^{(e)}, \mathbf{A}_{i,j}^{(e)})\big\}_{e=1}^E$. The cell-specific micro-router $\mathbf{W}_g^{(i,j)} \in \mathbb{R}^{E \times (d_{\text{in}}/K_x)}$ generates gating activations conditioned exclusively on its localized slice of the input vector $\mathbf{x}_j$:
\begin{equation}
    g^{(i,j)}(\mathbf{x}_j) = \text{Softmax}\left(\mathbf{W}_g^{(i,j)} \mathbf{x}_j\right) \in \mathbb{R}^E.
\end{equation}

The final conditional output contribution of cell $(i, j)$ is computed autonomously inside its coordinate boundaries. The full update for output block $i$ is formulated as:
\begin{equation}
    \mathbf{h}_{i}^{\text{Cell}} = \sum_{j=1}^{K_x} \left( \sum_{e=1}^E g_e^{(i,j)}(\mathbf{x}_j) \mathbf{B}_{i,j}^{(e)} \mathbf{A}_{i,j}^{(e)\top} \mathbf{x}_j \right).
\end{equation}

\subsection{Complete Layer Aggregation}

The total forward pass of the reparameterized layer assembles the frozen base mapping with the structural routed outputs. For an arbitrary configuration, the complete output is compiled as:
\begin{equation}
    \mathbf{h} = \mathbf{W}_0 \mathbf{x} + \begin{bmatrix} \mathbf{h}_1^\phi \\ \mathbf{h}_2^\phi \\ \vdots \\ \mathbf{h}_{K_y}^\phi \end{bmatrix}, \quad \phi \in \{\text{Block}, \text{Cell}\}.
\end{equation}

By separating parameter updates across cell regions and introducing coordinate gating pathways, this method ensures that off-diagonal field cross-talk ($\mathbf{B}_{i,j}\mathbf{A}_{i,j}^\top$ for $i \neq j$) can be selected dynamically. Consequently, local subregions can adjust to streaming context switches without overwriting shared parameter boundaries or suffering from gradient cancellation.

\begin{figure}[htbp]
    \centering
    \includegraphics[width=0.9\textwidth]{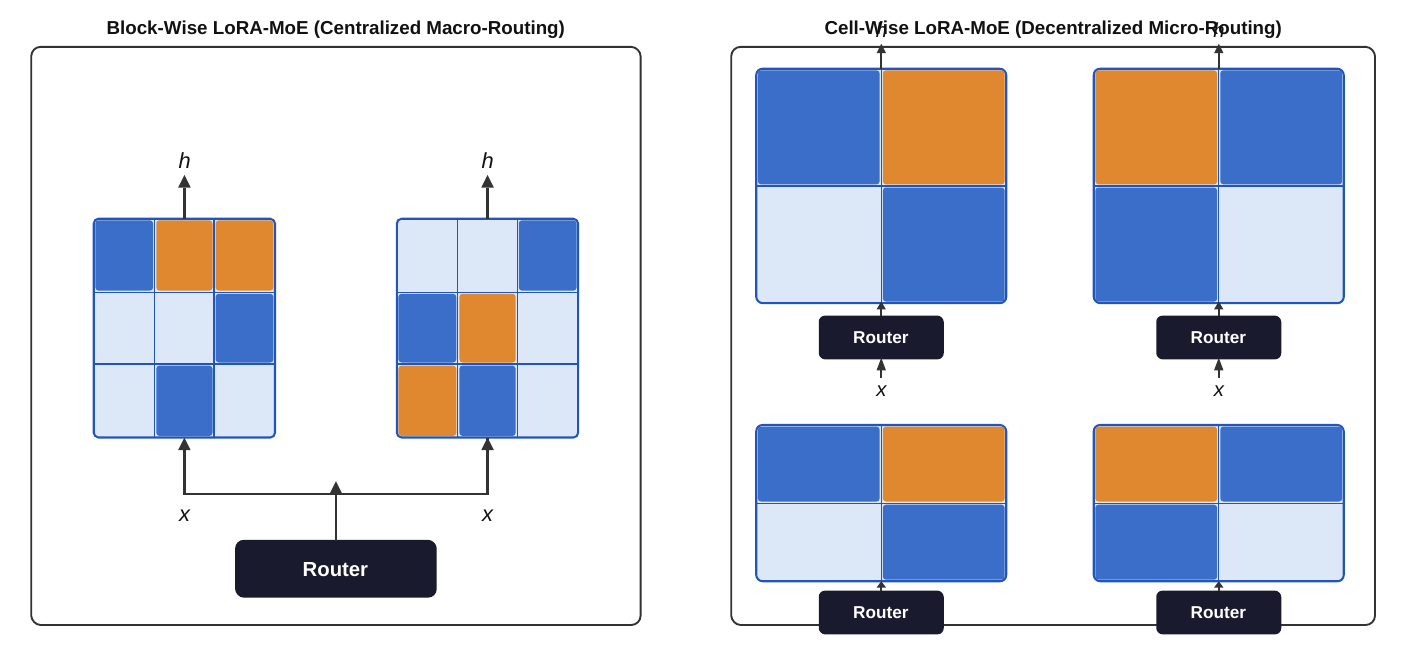}
    \caption{Schematic of the proposed architectures: (left) block-wise LoRA-MoE utilizes a centralized macro-router to modulate full-grid structural experts, whereas (right) cell-wise LoRA-MoE featuring completely decentralized coordinated micro-routers embedded within individual sub-block boundaries.}
    \label{fig:design}
\end{figure}

\section{Experiments}
\label{sec:exp}

To evaluate the effectiveness of Localized LoRA-MoE, we conduct a series of empirical investigations across three distinct structural transformation settings: a high-dimensional continuous matrix simulation, a real-world multi-field tabular dataset, and a spatial computer vision perception tracking task under dynamic context shifts. The overarching objective is to determine whether distributing parameter-efficient adapter updates across localized block matrix grids—governed by either macro-level routing architectures (\textbf{Localized LoRA-MoE (Global)}) or micro-level routing architectures (\textbf{Localized LoRA-MoE (Cell-Wise)})—can successfully break the optimization ceiling imposed by cross-domain gradient warfare while maintaining strict parameter parity bounds against standard baselines.

\subsection{Continuous SVD Multi-Field Interaction Simulation}

This experiment evaluates the framework's ability to navigate multi-context structural transitions over high-dimensional input spaces without succumbing to cross-domain gradient destruction or architectural capacity starvation. To simulate complex real-world data environments, the underlying transformations are generated via continuous matrix projections parameterized through Singular Value Decomposition (SVD), yielding dense, non-trivial statistical properties.

\subsubsection{Problem Formulation and Input Space Partitioning}
We instantiate a mapping system transforming an input vector $\mathbf{x}$ to a target representation $\mathbf{y}$. The input space is defined as a vector $\mathbf{x} \in \mathbb{R}^{d_{\text{in}}}$ with dimensionality $d_{\text{in}} = 64$. The input space is partitioned into $K = 4$ distinct, contiguous feature fields, each of dimension $d_k = 16$:
\begin{equation}
    \mathbf{x} = \begin{bmatrix} \mathbf{x}_1^T & \mathbf{x}_2^T & \mathbf{x}_3^T & \mathbf{x}_4^T \end{bmatrix}^T, \quad \mathbf{x}_i \in \mathbb{R}^{16}
\end{equation}
This configuration models downstream machine learning pipelines—such as a cybersecurity anomaly engine—where an incoming log vector aggregates separate semantic domains, including user profile details, network telemetry streams, and host process events.

The transformation mapping $\mathbf{x} \to \mathbf{y} \in \mathbb{R}^{64}$ is governed by a domain-specific target matrix $\mathbf{W}_{\text{target}}^{(d)}$ which shifts dynamically based on an environmental context variable $d \in \{0, 1, 2\}$:
\begin{equation}
    \mathbf{y} = \mathbf{W}_{\text{target}}^{(d)} \cdot \mathbf{x}
\end{equation}

\subsubsection{Context Generation and Structural Topologies}
The simulation establishes $D=3$ distinct target configurations. To inject mathematical realism, every active sub-block within these target matrices is independently synthesized via SVD to form dense, continuous linear systems:
\begin{equation}
    \mathbf{W}_{\text{block}}^{(i,j)} = \mathbf{U} \mathbf{\Sigma} \mathbf{V}^T
\end{equation}
where $\mathbf{U} \in \mathbb{R}^{16 \times 16}$ and $\mathbf{V} \in \mathbb{R}^{16 \times 16}$ are random orthogonal matrices drawn from a Haar distribution, and $\mathbf{\Sigma} \in \mathbb{R}^{16 \times 16}$ represents a singular value spectrum scaled by $\frac{1}{\sqrt{d_k}}$ to preserve variance bounds across transformations. The contexts dictate how these feature fields cross-correlate, generating three structural topologies:

\begin{enumerate}[leftmargin=*,nolistsep]
    \item \textbf{Context 0: Intra-Field Conditioning (Block-Diagonal Profile)} \\
    The system behavior relies strictly on self-contained features. Interactions between disparate blocks are mathematically irrelevant, yielding a ground-truth matrix structured as:
    \begin{equation}
        \mathbf{W}_{\text{target}}^{(0)} = \begin{bmatrix} 
        \mathbf{W}_{\text{block}}^{(1,1)} & \mathbf{0} & \mathbf{0} & \mathbf{0} \\
        \mathbf{0} & \mathbf{W}_{\text{block}}^{(2,2)} & \mathbf{0} & \mathbf{0} \\
        \mathbf{0} & \mathbf{0} & \mathbf{W}_{\text{block}}^{(3,3)} & \mathbf{0} \\
        \mathbf{0} & \mathbf{0} & \mathbf{0} & \mathbf{W}_{\text{block}}^{(4,4)}
        \end{bmatrix}
    \end{equation}

    \item \textbf{Context 1: Inter-Field Cross-Talk (Checkerboard Pattern)} \\
    The system state triggers structural dependencies between disjoint feature domains. This populates off-diagonal blocks alternating on odd coordinate sums satisfying $(i+j) \pmod 2 = 1$:
    \begin{equation}
        \mathbf{W}_{\text{target}}^{(1)} = \begin{bmatrix} 
        \mathbf{0} & \mathbf{W}_{\text{block}}^{(1,2)} & \mathbf{0} & \mathbf{W}_{\text{block}}^{(1,4)} \\
        \mathbf{W}_{\text{block}}^{(2,1)} & \mathbf{0} & \mathbf{W}_{\text{block}}^{(2,3)} & \mathbf{0} \\
        \mathbf{0} & \mathbf{W}_{\text{block}}^{(3,2)} & \mathbf{0} & \mathbf{W}_{\text{block}}^{(3,4)} \\
        \mathbf{W}_{\text{block}}^{(4,1)} & \mathbf{0} & \mathbf{W}_{\text{block}}^{(4,3)} & \mathbf{0}
        \end{bmatrix}
    \end{equation}

    \item \textbf{Context 2: Asymmetric Causal Cascading (Upper Off-Diagonal Shift)} \\
    The system experiences a directional causal chain where feature field $i$ drives targets in field $i+1$. The parameters are confined to the localized upper off-diagonal layer:
    \begin{equation}
        \mathbf{W}_{\text{target}}^{(2)} = \begin{bmatrix} 
        \mathbf{0} & \mathbf{W}_{\text{block}}^{(1,2)} & \mathbf{0} & \mathbf{0} \\
        \mathbf{0} & \mathbf{0} & \mathbf{W}_{\text{block}}^{(2,3)} & \mathbf{0} \\
        \mathbf{0} & \mathbf{0} & \mathbf{0} & \mathbf{W}_{\text{block}}^{(3,4)} \\
        \mathbf{0} & \mathbf{0} & \mathbf{0} & \mathbf{0}
        \end{bmatrix}
    \end{equation}
\end{enumerate}

\subsubsection{The Gradient Warfare Mechanism}
To ensure that the learning challenge targets the representational agility of the adaptation layer, the models are evaluated on a frozen base matrix initialized close to zero: $\mathbf{W}_0 \sim \mathcal{N}(0, 0.01^2)$. This structure prevents the base network from buffering multi-domain properties, forcing the full optimization load onto the trainable adapter updates ($\Delta \mathbf{W}$).

The dataset stream samples $N = 10,000$ continuous vectors $\mathbf{x} \sim \mathcal{N}(0, \mathbf{I}_{64})$. Crucially, the context tracker $d$ is randomized uniformly across the sequence using an index generator: $d_n \sim \mathcal{U}\{0, D-1\}$, which models multi-modal pipeline environments where operational states change from one processing batch to the next.

Because static architectures utilize a singular parameter block to accommodate all incoming data, the updates backpropagated through consecutive steps experience destructive cross-cancellation. Step $n$ (Context 0) issues gradients attempting to zero out off-diagonal weights to preserve diagonal tracking, while step $n+1$ (Context 1) maps dense coefficients to those identical off-diagonal coordinates, generating a geometric conflict:
\begin{equation}
    \langle \nabla_{\mathbf{\Theta}} \mathcal{L}(d_n), \, \nabla_{\mathbf{\Theta}} \mathcal{L}(d_{n+1}) \rangle < 0
\end{equation}
This gradient deadlock forces monolithic, unrouted networks to stall during backpropagation, causing them to converge to an unspecialized average of all three matrices.

\subsubsection{Baseline and Proposed Configurations}
To validate performance under rigid optimization conditions, all models are constrained to a uniform parameter footprint of $\sim 2,048$ parameters:
\begin{itemize}[leftmargin=*,nolistsep]
    \item \textbf{LoRA:} Instantiates standard low-rank matrices across the full layer dimensions with rank $r=16$. Trainable Parameters: $2 \times 64 \times 16 = \mathbf{2,048}$.
    \item \textbf{MELoRA \cite{ren2024melora}:} Restricts updates purely to four isolated diagonal sub-blocks using a diagonal low-rank allocation ($r_{\text{diag}}=16$). Trainable Parameters: $4 \times (2 \times 16 \times 16) = \mathbf{2,048}$.
    \item \textbf{Localized LoRA \cite{barazandeh2025localized}:} Distributes parameters across a full $4 \times 4$ block grid ($r_{\text{block}}=4$) but lacks contextual switching capabilities. Trainable Parameters: $16 \times (2 \times 16 \times 4) = \mathbf{2,048}$.
    \item \textbf{Proposed Localized LoRA-MoE (Global):} Integrates $M=2$ full-grid expert layers governed by a monolithic context-conditioned gating projection $\mathbf{W}_g \in \mathbb{R}^{2 \times 3}$. Trainable Parameters: $2 \times [16 \times (2 \times 16 \times 2)] + 6 = \mathbf{2,054}$.
    \item \textbf{Proposed Localized LoRA-MoE (Cell-Wise):} Maintains $M=2$ low-rank expert paths per block, but assigns an independent routing tensor $\bm{\mathcal{W}}_g^{(i,j)} \in \mathbb{R}^{3 \times 2}$ to every coordinate cell $(i,j)$ in the matrix grid. Trainable Parameters: $2 \times [16 \times (2 \times 16 \times 2)] + (4 \times 4 \times 3 \times 2) = \mathbf{2,144}$.
\end{itemize}

\subsubsection{Quantitative Performance and Discussion}
All networks are optimized using AdamW with a learning rate of $1 \times 10^{-3}$ over 30 epochs. Performance is quantified evaluating the final Mean Squared Error (MSE) and the Coefficient of Determination ($R^2$ Score), which measures the total percentage of target transformation variance explained by the fine-tuning adapter.

\begin{table}[htbp]
\centering
\setlength{\tabcolsep}{5pt} 
\caption{Quantitative Performance on Continuous SVD Multi-Field Transformation Benchmarks Under Parameter Parity.}
\label{tab:numerical_results}
\vspace{2mm}
\begin{tabular}{lccc}
\toprule
\textbf{Method} & \textbf{\shortstack{Trainable\\Parameters}} & \textbf{\shortstack{Evaluation\\MSE}} & \textbf{\shortstack{$R^2$ Score\\(Variance Exp.)}} \\
\midrule
MELoRA & 2,048 & 0.9525 & 10.00\% \\
Localized LoRA & 2,048 & 0.8401 & 20.62\% \\
LoRA & 2,048 & 0.8390 & 20.72\% \\
\textbf{Localized LoRA-MoE (Global)} & \textbf{2,054} & \textbf{0.7746} & \textbf{26.81\%} \\
\textbf{Localized LoRA-MoE (Cell-Wise)} & \textbf{2,144} & \textbf{0.6531} & \textbf{38.29\%} \\
\bottomrule
\end{tabular}
\end{table}

The metrics compiled in Table~\ref{tab:numerical_results} offer a number of insights about the methods compared.

\begin{enumerate}[leftmargin=*,nolistsep]
    \item \textbf{The Failure Mode of Diagonal Enforcements:} MELoRA encounters significant optimization limits, explaining only $10.00\%$ of the data variance. Because its structural constraints force off-diagonal elements to remain strictly zero, it encounters an absolute architectural barrier when handling the inter-field cross-talk found in Contexts 1 and 2.
    \item \textbf{The Static Ceiling:} LoRA ($20.72\%$) and Static Localized LoRA ($20.62\%$) hit an identical performance ceiling. Although Static Localized LoRA contains the spatial blocks necessary to map out off-diagonal cross-talk, it lacks context awareness. The alternating domain stream subjects its parameters to continuous gradient warfare, causing them to collapse into an unspecialized average matrix.
    \item \textbf{The Cell-Wise Spatial Routing Breakthrough:} The proposed localized routing layouts overcome these boundaries. The macro-routing layout, \textit{Proposed Localized LoRA-MoE (Global)}, achieves an $R^2$ score of $26.81\%$ by switching entire grid updates monolithically. However, our proposed decentralized variant, \textbf{Proposed Localized LoRA-MoE (Cell-Wise)}, outperforms all alternatives, explaining \textbf{38.29\%} of the system variance---a $+11.48\%$ absolute boost over the global routing alternative.
\end{enumerate}

By assigning a dedicated micro-router to each individual block coordinate cell $(i,j)$, the network removes the constraint of homogeneous gating choices. During inference, a diagonal sub-block cell can route inputs to an expert optimized for localized intra-field properties, while an off-diagonal cell simultaneously pivots its local expert mix to capture multi-modal cross-talk. This micro-routing framework adds a tiny footprint of just 96 parameters, yet it provides a robust gradient firewall that allows localized low-rank spaces to cleanly converge without cross-domain parameter corruption.

\subsection{Cross-Domain Adaptation in Tabular Data}

To validate the structural and optimization properties of the proposed architectures in a realistic setting, we construct a controlled numerical benchmark utilizing the \textbf{California Housing} dataset~\cite{pedregosa2011scikit}. This evaluation isolates the framework's capacity to navigate context-driven structural transitions without succumbing to cross-domain gradient destruction or architectural capacity starvation.

\subsubsection{Problem Formulation and Multi-Field Feature Partitioning}
We instantiate a mapping system that transforms an incoming real-world feature vector $\mathbf{x} \in \mathbb{R}^{d_{\text{in}}}$ to a target representation $\mathbf{y} \in \mathbb{R}^{d_{\text{out}}}$, where $d_{\text{in}} = d_{\text{out}} = 8$. Rather than treating the feature space as an unstructured sequence, we partition the 8 census attributes into $K = 2$ distinct semantic feature fields of dimension $d_k = 4$:
\begin{equation}
    \mathbf{x} = \begin{bmatrix} \mathbf{x}_{\text{demographics}}^T & \mathbf{x}_{\text{property}}^T \end{bmatrix}^T
\end{equation}
where $\mathbf{x}_{\text{demographics}}$ aggregates socioeconomic indicators (\texttt{MedInc}, \texttt{HouseAge}, \texttt{Population}, \texttt{AveOccup}) and $\mathbf{x}_{\text{property}}$ aggregates physical asset profiles (\texttt{AveRooms}, \texttt{AveBedrms}, \texttt{Latitude}, \texttt{Longitude}).

The mapping pipeline simulates a multi-context downstream system. The raw input vector $\mathbf{x}$ is scaled to standard normal distributions via a localized transformation layer $\mathbf{x} \sim \mathcal{N}(\bm{\mu}, \SigmaDiag)$. Here, $\SigmaDiag$ is a block-diagonal covariance matrix that ensures each feature field is normalized relative only to its internal statistics, thereby preserving the structural independence of the input domains prior to adapter processing.

Given this partitioned input space, the operational mapping $\mathbf{x} \to \mathbf{y}$ is conditioned on an environmental binary context variable $c \in \{0, 1\}$ passed alongside the batch:
\begin{equation}
    \mathbf{y} = \mathbf{W}_{\text{target}}^{(c)} \cdot \mathbf{x}
\end{equation}
\subsubsection{Context Generation and Matrix Topologies}
The transformation matrices are synthesized using orthogonal base mappings extracted via SVD to prevent trivial geometric arrangements. The context variable $c$ dictates how the socioeconomic blocks and property/geographic blocks cross-correlate, generating two distinct structural topologies:

\begin{enumerate}[leftmargin=*,nolistsep]
    \item \textbf{Context 0: Intra-Field Dominant Transformation ($c=0$)} \\
    The evaluation dictates that features only interact within their own structural boundaries. This yields a strictly \textbf{block-diagonal} target transformation matrix:
    \begin{equation}
        \mathbf{W}_{\text{target}}^{(0)} = \begin{bmatrix} 
        \mathbf{W}_{\text{block}}^{(1,1)} & \mathbf{0} \\
        \mathbf{0} & \mathbf{W}_{\text{block}}^{(2,2)}
        \end{bmatrix}
    \end{equation}
    where each active sub-block $\mathbf{W}_{\text{block}} \in \mathbb{R}^{4 \times 4}$ represents a dense SVD system scaled by $0.5$ to ensure bounded stability.

    \item \textbf{Context 1: Inter-Field Cross-Talk Transformation ($c=1$)} \\
    The system shifts into a cross-dependent state where target profiles are exclusively driven by the cross-talk \emph{between} disjoint spaces. This results in an \textbf{off-diagonal} target profile:
    \begin{equation}
        \mathbf{W}_{\text{target}}^{(1)} = \begin{bmatrix} 
        \mathbf{0} & \mathbf{W}_{\text{block}}^{(1,2)} \\
        \mathbf{W}_{\text{block}}^{(2,1)} & \mathbf{0}
        \end{bmatrix}
    \end{equation}
\end{enumerate}

\subsubsection{The Gradient Warfare Bottleneck}
To evaluate parameter efficiency under rigid optimization bounds, all parameters are trained against a frozen base matrix initialized near zero: $\mathbf{W}_0 \sim \mathcal{N}(0, 0.01^2)$. This structure prevents the base network from absorbing multi-domain properties, forcing the full optimization load onto the adapter updates ($\Delta \mathbf{W}$).

The dataset streams $N = 20,640$ scaled vectors across a randomized sequence of contexts $c_n \sim \mathcal{U}\{0, 1\}$. Because static configurations utilize a singular parameter block to accommodate all incoming data, the updates backpropagated through consecutive steps experience destructive cross-cancellation:
\begin{equation}
    \langle \nabla_{\mathbf{\Theta}} \mathcal{L}(c_n=0), \, \nabla_{\mathbf{\Theta}} \mathcal{L}(c_{n+1}=1) \rangle < 0
\end{equation}
Step $n$ demands that the off-diagonal block elements compress to zero to isolate feature lines, while step $n+1$ demands high-magnitude coefficients across those exact coordinates. This gradient conflict causes unrouted networks to stall during backpropagation, causing them to converge to an unspecialized average of both target spaces.

\subsubsection{Baseline Configurations and Parameter Parity Bounding}
To establish a mathematically rigorous environment, all baselines are tuned relative to their grid dimensions to operate within an identical range of $\sim 256$--$272$ trainable parameters:
\begin{itemize}[leftmargin=*,nolistsep]
    \item \textbf{LoRA ($r=16$):} Applies a singular global low-rank update matrix across the full face of the layer. Trainable Parameters: $2 \times 8 \times 16 = \mathbf{256}$.
    \item \textbf{MELoRA ($num\_blocks=2, r_{\text{diag}}=16$):} Restricts parameter updates purely to two isolated $4 \times 4$ sub-blocks along the main diagonal. Trainable Parameters: $2 \times (2 \times 4 \times 16) = \mathbf{256}$.
    \item \textbf{Localized LoRA Static ($K=2, r_{\text{block}}=8$):} Instantiates a full $2 \times 2$ block grid across the surface, giving every sub-block a static matrix rank of 8, but lacks contextual switching capabilities. Trainable Parameters: $4 \text{ blocks} \times (2 \times 4 \times 8) = \mathbf{256}$.
    \item \textbf{Localized LoRA-MoE (Global) ($M=2, r_{\text{block}}=4$):} Deploys $M=2$ full-grid expert layers governed by a monolithic context gating projection $\mathbf{W}_g \in \mathbb{R}^{2 \times 2}$. Trainable Parameters: $[2 \text{ experts} \times 4 \text{ blocks} \times (2 \times 4 \times 4)] + 4 = \mathbf{260}$.
    \item \textbf{Localized LoRA-MoE (Cell-Wise) ($M=2, r_{\text{block}}=4$):} Maintains $M=2$ low-rank expert paths per block, but introduces independent routing tensors $\bm{\mathcal{W}}_g^{(i,j)} \in \mathbb{R}^{2 \times 2}$ assigned to every cell coordinate $(i,j)$ in the matrix grid. Trainable Parameters: $[2 \text{ experts} \times 4 \text{ blocks} \times (2 \times 4 \times 4)] + (2 \times 2 \times 2 \times 2) = \mathbf{272}$.
\end{itemize}
\subsubsection{Quantitative Results and Technical Discussion}
All networks were optimized using the AdamW optimizer with a learning rate of $5\times 10^{-3}$ for 15 epochs using a batch size of 128. The empirical metrics are compiled in Table~\ref{tab:real_data_results}, and their full epoch-by-epoch convergence signatures are plotted in Figure~\ref{fig:cali_convergence}.

\begin{table}[htbp]
\centering
\caption{Performance comparisons Under parameter parity bounding on the 
California Housing data.}
\label{tab:real_data_results}
\vspace{2mm}
\resizebox{\linewidth}{!}{%
\begin{tabular}{lccc}
\toprule
\textbf{Method} & \textbf{Trainable Parameters} & \textbf{Evaluation MSE} & \textbf{$R^2$ Score (Variance Exp.)} \\
\midrule
MELoRA & 256 & 0.8781 & -7.78\% \\
Localized LoRA & 256 & 0.4106 & 49.61\% \\
LoRA & 256 & 0.4424 & 45.70\% \\
\textbf{Block-Wise LoRA-MoE (Global)} & \textbf{260} & \textbf{0.0028} & \textbf{99.65\%} \\
\textbf{Cell-Wise LoRA-MoE (Decentralized)} & \textbf{272} & \textbf{0.0040} & \textbf{99.51\%} \\
\bottomrule
\end{tabular}%
}
\end{table}

\begin{figure}[htbp]
    \centering
    \includegraphics[width=0.75\linewidth]{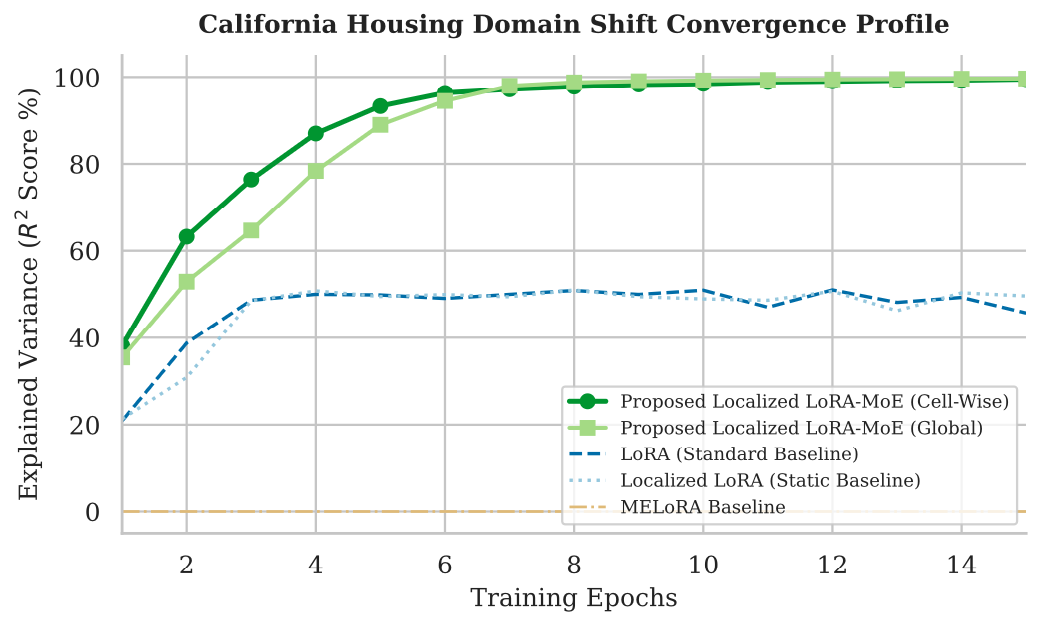}
    \caption{Convergence trajectory of explained variance ($R^2$ score \%) over 15 training epochs on the California Housing Domain Shift benchmark. The static and block-diagonal baselines experience immediate capacity saturation, while both proposed routing configurations bypass gradient conflicts to model the full target system variance near perfectly.}
    \label{fig:cali_convergence}
\end{figure}

Among the baseline methods, MELoRA demonstrates an absolute optimization failure, yielding a negative $R^2$ score of $-7.78\%$. Because its structural layout forces all updates to align strictly along independent diagonal sub-blocks, it is mathematically incapable of representing feature cross-talk. When the system shifts to Context 1, MELoRA has zero parameter allocation over off-diagonal fields, resulting in destructive gradient interference during backpropagation. Both LoRA ($45.70\%$) and Static Localized LoRA ($49.61\%$) converge to an identical performance plateau. Although LoRA possesses full rank coverage and Static Localized LoRA contains the necessary off-diagonal blocks to process feature cross-talk, neither contains dynamic context routing logic. Lacking an adaptive gating mechanism, static models collapse into an unspecialized average matrix of the system, leaving roughly half of the underlying variance unmodeled due to continuous gradient warfare.

The introduction of a dynamic contextual gate breaks this optimization ceiling, with both of our proposed architectures capturing nearly all system variance, as illustrated in Figure~\ref{fig:cali_convergence}. Because environmental shifts alter the macro weight profile uniformly across tabular blocks, block-wise Localized LoRA-MoE achieves a marginal lead ($99.65\%$ vs $99.51\%$) by maintaining cohesive matrix-wide representations. The results for the cell-wise, decentralized proves that localized block autonomy can reach the performance upper bounds of a global system. By parameterizing independent routing layers at every discrete cell coordinate $(i, j)$, individual block spaces are granted structural autonomy. This creates a robust gradient firewall, allowing specialized sub-blocks to isolate cross-domain parameters cleanly and eliminate gradient overwriting under continuous task shifts.
\subsection{Robustness Evaluation on Computer Vision Data}
\label{sec:calif}

To demonstrate the structural utility of our proposal, we evaluate performance under a simulated \emph{Dynamic Sensor Degradation} scenario using the MNIST corpus. This experiment models a production vehicle perception network that must dynamically reorganize its spatial processing weights when an environmental failure occludes a portion of the sensor array.

\subsubsection{Spatial Partitioning and Degradation Topology}
Let an input image canvas be flattened and partitioned into $K = 4$ independent spatial quadrants $\mathbf{x} = [\mathbf{x}_1^T, \mathbf{x}_2^T, \mathbf{x}_3^T, \mathbf{x}_4^T]^T \in \mathbb{R}^{784}$, where each vector component $\mathbf{x}_i \in \mathbb{R}^{196}$ represents a localized $14 \times 14$ pixel window. The network maps these incoming feature spaces to a global target reconstruction tensor $\mathbf{y} \in \mathbb{R}^{784}$ through a transformation matrix $\mathbf{W}_{\text{target}}^{(c)}$ that adapts to an external hardware diagnostic context flag $c \in \{0, 1\}$:
\begin{equation}
    \mathbf{y} = \mathbf{W}_{\text{target}}^{(c)} \cdot \mathbf{x}
\end{equation}

The hardware operational contexts dictate how information must flow across spatial coordinates:
\begin{enumerate}[leftmargin=*,nolistsep]
    \item \textbf{Context 0 (Nominal Array State):} The full camera lens is unobstructed. The transformation target maps nominal spatial correlations uniformly, relying on dense anti-diagonal block structures to capture balanced symmetric geometric cross-talk across opposite corners of the canvas.
    \item \textbf{Context 1 (Localized Quadrant Failure):} The bottom-right quadrant ($\mathbf{x}_4$) becomes occluded due to physical lens degradation. To maintain system safety, the network must instantly sever dependencies on $\mathbf{x}_4$ and dynamically shift to an asymmetric off-diagonal tracking state, forcing surviving quadrants ($\mathbf{x}_1$ and $\mathbf{x}_3$) to cross-talk directly to reconstruct structural boundaries.
\end{enumerate}

\subsubsection{Gradient Interference in Shared Structural Weights}
Because the network parameterizes a dense $784 \times 784$ transformation surface ($614,656$ base elements), static parameter efficiency is strictly bounded at $\sim 25,000$ trainable units across all baselines. When a static network processes a stream containing interleaved cycles of Nominal and Degraded states, the shared adapter weights experience destructive optimization feedback. The backward path for Context 0 forces updates to maximize symmetric corner correlations, while Context 1 backpropagates updates that suppress those exact pathways to isolate the broken sensor quadrant. Lacking a localized gating mechanism, standard adapters trigger ``gradient warfare'' across the shared parameter block, collapsing into an unspecialized average weight state that cannot adequately resolve either operational reality.

\subsubsection{Quantitative Evaluation and Discussion on Spatial Reconstruction}
The empirical findings for the high-dimensional spatial vision reconstruction benchmark (Table~\ref{tab:mnist_fault_results}) illustrate distinct geometric bottlenecks when parameter-efficient adapters scale from compact tabular spaces to sprawling spatial coordinate fields.

\begin{table}[htbp]
\centering
\small
\caption{Fault-Tolerant Spatial Reconstruction Performance ($784 \to 784$ Mapping).}
\label{tab:mnist_fault_results}
\begin{tabular}{lccc}
\toprule
\textbf{Method} & \textbf{Parameters} & \textbf{MSE} & \textbf{$R^2$ (\%)} \\
\midrule
MELoRA & 50,176 & 0.1281 & 23.56 \\
Localized LoRA (Static) & 25,088 & 0.1077 & 35.74 \\
LoRA & 25,088 & 0.1066 & 36.42 \\
\textbf{Proposed (Global Router)} & 25,092 & 0.0553 & 66.67 \\
\textbf{Proposed (Cell-Wise)} & 25,152 & 0.0542 & 66.99 \\
\bottomrule
\end{tabular}
\end{table}

LoRA ($36.42\%$) and Static Localized LoRA ($35.74\%$) settle into a low-performing equilibrium. Lacking conditional execution logic, the optimizer is trapped in an architectural tug-of-war. The resulting gradient interference collapses the weight trajectories into an unspecialized mathematical average, failing to protect the perception pipeline from localized sensor dropouts. MELoRA demonstrates severe capacity starvation, capturing only $23.56\%$ of the variance despite using a double parameter budget. This exposes the flaw of assuming diagonal block independence in structural vision; by physically prohibiting updates from populating off-diagonal coordinate spaces, it prevents the network from resolving multi-quadrant structural cross-talk required for safe perception.

The injection of contextual gating breaks the gradient interference bottleneck, yielding a $\sim 30\%$ lift in explained variance. Both versions of our proposal--- global ($66.67\%$) and cell-wise ($66.99\%$) architectures successfully resolve the structural bottleneck. 

\begin{figure}[H]
    \centering
    \includegraphics[width=0.75\textwidth]{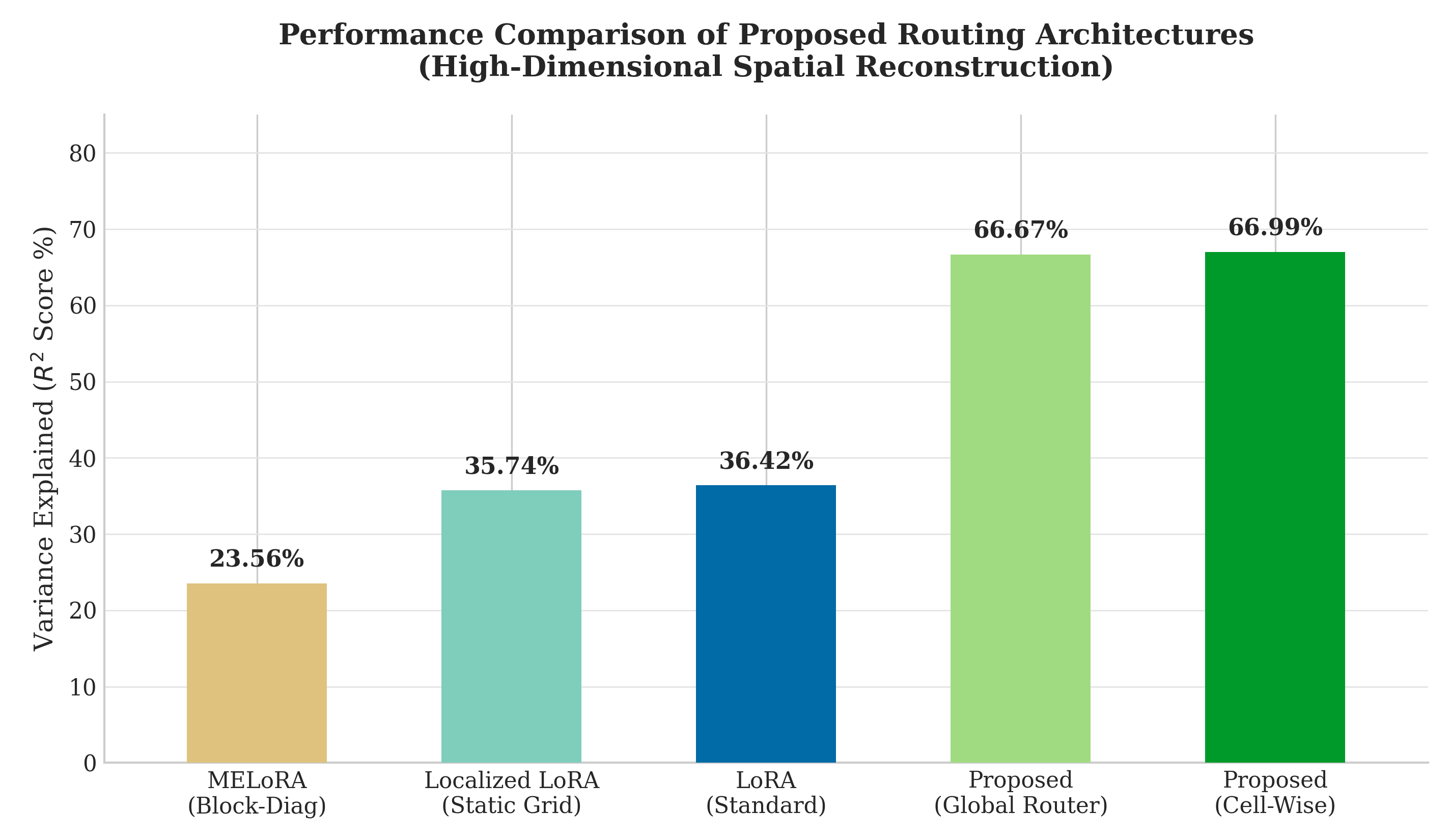}
    \caption{Performance comparison of proposed routing architectures. Both the Global Router and Cell-Wise variants are proposed contributions, demonstrating significant gains over static and block-diagonal baselines.}
    \label{fig:mnist_comparison}
\end{figure}

This demonstrates that decentralized block-level gating can match the efficacy of an omniscient global coordinator. By granting each localized block cell the structural autonomy to configure its own routing space conditionally, it isolates cross-domain tasks natively. This allows the model to protect surviving sensor channels from fault-propagated gradient corruption without relying on a rigid, single-point global routing bottleneck.

\section{Theoretical Analysis}
\label{sec:theory}

The empirical results in Section~\ref{sec:exp} show that both of our proposed architectures break the static ceiling. The two are at parity when context shifts the grid uniformly, and cell-wise routing strictly dominates when context reorganizes coordinates
heterogeneously. In this section, we theoretically characterize the \emph{reachable set} of effective weight updates of each architecture and show that (i) cell-wise routing contains block-wise routing as a special case, (ii) the two coincide on routing-uniform targets, and (iii) cell-wise routing strictly improves achievable error on spatially heterogeneous targets. Our claims concern representational \emph{capacity}, which
effectively updates an architecture can realize at a fixed parameter
budget, and not the dynamics of any particular optimizer. We therefore reason
about the set of update matrices each router can produce, which makes the
analysis agnostic to whether a gate is conditioned on the raw activation or on
a context indicator.

\subsection{Setup}
\label{sec:theory-setup}

Following the static partitioning of Section~\ref{sec:methods}, fix a
grid of $K_y \times K_x$ coordinate cells and a pool of $E$ low-rank experts
per cell, $\{(\mathbf{B}_{i,j}^{(e)}, \mathbf{A}_{i,j}^{(e)})\}_{e=1}^{E}$.
Writing $\mathbf{M}_{i,j}^{(e)} = \mathbf{B}_{i,j}^{(e)} \mathbf{A}_{i,j}^{(e)\top}
\in \mathbb{R}^{(d_{\text{out}}/K_y) \times (d_{\text{in}}/K_x)}$ for the
rank-$r_b$ update contributed by expert $e$ at cell $(i,j)$, the routed update
applied at cell $(i,j)$ is a convex combination
\begin{equation}
    \Delta_{i,j}(\bm{\pi}_{i,j}) \;=\; \sum_{e=1}^{E} \pi_{i,j}^{(e)}\,
    \mathbf{M}_{i,j}^{(e)}, \qquad
    \bm{\pi}_{i,j} \in \Delta^{E-1},
\end{equation}
where $\Delta^{E-1}$ is the probability simplex and $\bm{\pi}_{i,j}$ is the
gate output at that cell. The full effective update $\Delta \in
\mathbb{R}^{d_{\text{out}} \times d_{\text{in}}}$ is the assembly of the
$\Delta_{i,j}$ into the grid via the operator $\mathcal{B}\llbracket
\cdot \rrbracket$ of~\cite{barazandeh2025localized}. The two architectures
differ only in which gate vectors are admissible.

\begin{definition}[Block-wise reachable set]
\label{def:block-reach}
Block-wise (global) routing applies a single gate $\bm{\pi} \in \Delta^{E-1}$
to every cell. Its reachable set of effective updates is
\begin{equation}
    \mathcal{R}_{\text{block}} \;=\;
    \Big\{\, \mathcal{B}\big\llbracket \Delta_{i,j}(\bm{\pi}) \big\rrbracket
    \;:\; \bm{\pi} \in \Delta^{E-1} \,\Big\}.
\end{equation}
\end{definition}

\begin{definition}[Cell-wise reachable set]
\label{def:cell-reach}
Cell-wise (decentralized) routing applies an independent gate
$\bm{\pi}_{i,j} \in \Delta^{E-1}$ at each cell. Its reachable set is
\begin{equation}
    \mathcal{R}_{\text{cell}} \;=\;
    \Big\{\, \mathcal{B}\big\llbracket \Delta_{i,j}(\bm{\pi}_{i,j}) \big\rrbracket
    \;:\; \bm{\pi}_{i,j} \in \Delta^{E-1}\ \forall (i,j) \,\Big\}.
\end{equation}
\end{definition}

For a multi-context target we write $\{\Delta^{(d)}\}_{d=1}^{D}$ for the per-context
ground-truth updates, each obeying the spatial low-rank structure of Assumption~1
of~\citet{barazandeh2025localized}: every block $\Delta^{(d)}_{i,j}$ satisfies
$\operatorname{rank}(\Delta^{(d)}_{i,j}) \le r_{\text{local}}$ with
$r_{\text{local}} \ll d/K$. Given a context distribution, the quantity of interest
is the achievable expected approximation error of an architecture with reachable
set $\mathcal{R}$,
\begin{equation}
    \mathcal{E}(\mathcal{R}) \;=\;
    \min_{ \{\widehat{\Delta}^{(d)}\} \subseteq \mathcal{R} }\;
    \mathbb{E}_{d}\big[\, \big\| \Delta^{(d)} - \widehat{\Delta}^{(d)} \big\|_F^2 \,\big],
\end{equation}
i.e.\ the best per-context update each architecture can select from its reachable set.

\subsection{Expressivity Containment}
\label{sec:theory-containment}

\begin{proposition}[Containment]
\label{prop:containment}
For any fixed expert pool $\{\mathbf{M}_{i,j}^{(e)}\}$ and any $E$,
\begin{equation}
    \mathcal{R}_{\text{block}} \;\subseteq\; \mathcal{R}_{\text{cell}}.
\end{equation}
Consequently $\mathcal{E}(\mathcal{R}_{\text{cell}}) \le
\mathcal{E}(\mathcal{R}_{\text{block}})$ for every multi-context target, so cell-wise routing is never worse than block-wise routing at matched experts.
\end{proposition}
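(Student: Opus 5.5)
The containment follows by viewing a block-wise gate as a cell-wise gate that happens to be constant across cells. The error inequality then follows because minimizing over a larger feasible set can only lower the optimum.

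\textbf{Step 1 (set containment).} Let $\Delta \in \mathcal{R}_{\text{block}}$. By Definition~\ref{def:block-reach} there is a $\bm{\pi} \in \Delta^{E-1}$ with $\Delta = \mathcal{B}\llbracket \Delta_{i,j}(\bm{\pi}) \rrbracket$. Define the cell-wise assignment $\bm{\pi}_{i,j} := \bm{\pi}$ for every $(i,j)$; each $\bm{\pi}_{i,j}$ lies in $\Delta^{E-1}$, so it is admissible in Definition~\ref{def:cell-reach}. Since $\Delta_{i,j}(\cdot)$ is evaluated cell by cell and the assembly operator $\mathcal{B}\llbracket\cdot\rrbracket$ depends only on the collection of cell blocks, $\mathcal{B}\llbracket \Delta_{i,j}(\bm{\pi}_{i,j}) \rrbracket = \mathcal{B}\llbracket \Delta_{i,j}(\bm{\pi}) \rrbracket = \Delta$. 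Hence $\Delta \in \mathcal{R}_{\text{cell}}$. The argument uses no property of the experts $\mathbf{M}_{i,j}^{(e)}$, the value of $E$, or the grid shape, so it holds for any fixed expert pool.

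\textbf{Step 2 (error monotonicity).} For a fixed multi-context target $\{\Delta^{(d)}\}$ and context distribution, $\mathcal{E}(\mathcal{R})$ is the infimum of the same objective $\mathbb{E}_d\|\Delta^{(d)} - \widehat{\Delta}^{(d)}\|_F^2$ over tuples $(\widehat{\Delta}^{(d)})_d \in \mathcal{R}^D$. The feasible tuples for block-wise routing are all feasible for cell-wise routing, because $\mathcal{R}_{\text{block}}^D \subseteq \mathcal{R}_{\text{cell}}^D$ by Step 1. The infimum over a superset is therefore no larger, giving $\mathcal{E}(\mathcal{R}_{\text{cell}}) \le \mathcal{E}(\mathcal{R}_{\text{block}})$.

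\textbf{Main obstacle.} The only delicate point is whether the minimum in the definition of $\mathcal{E}$ is attained, since the statement writes $\min$ rather than $\inf$. The map $(\bm{\pi}_{i,j})_{i,j} \mapsto \mathcal{B}\llbracket \Delta_{i,j}(\bm{\pi}_{i,j}) \rrbracket$ is linear, hence continuous, on the compact product of simplices. Both reachable sets are therefore compact, and the continuous objective attains its minimum on them. The monotonicity argument works with infima in any case, so this remark only justifies the notation. One should also note that the comparison is at a matched expert pool; any parameter overhead from the extra routers is irrelevant to this capacity statement.
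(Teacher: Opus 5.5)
Your proposal is correct and matches the paper's proof: you embed the block-wise gate as the constant cell-wise assignment $\bm{\pi}_{i,j} = \bm{\pi}$, then observe that a minimum over a superset is no larger. Your compactness remark (the gate-to-update map is linear on a compact product of simplices, so the $\min$ in $\mathcal{E}$ is attained) adds a bit of rigor the paper omits, but the inequality does not depend on it.
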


\begin{proof}
Take any element of $\mathcal{R}_{\text{block}}$, generated by a single gate
$\bm{\pi}$. Setting $\bm{\pi}_{i,j} = \bm{\pi}$ for all $(i,j)$ in
Definition~\ref{def:cell-reach} produces the identical assembled update, so the
element lies in $\mathcal{R}_{\text{cell}}$. The error inequality follows because a
minimum over a superset is no larger.
\end{proof}

Containment is the formal content of the observation that a per-cell router can
always imitate a global one. The two reachable sets also differ sharply in size.
The block-wise set is the image of a single simplex $\Delta^{E-1}$, an
$(E\!-\!1)$-dimensional object: it can realize one grid-wide mixture at a time.
The cell-wise set is the image of the product $\big(\Delta^{E-1}\big)^{K_x K_y}$
and can place a different mixture at every coordinate; restricting to vertices
already yields on the order of $E^{K_x K_y}$ distinct grid configurations against
the block-wise count of $E$. This combinatorial gap is dormant when one grid-wide mixture suffices, and decisive otherwise. We show this below.

\subsection{Parity on Routing-Uniform Targets}
\label{sec:theory-parity}

\begin{definition}[Routing-uniform target]
\label{def:uniform}
A multi-context target $\{\Delta^{(d)}\}_{d=1}^{D}$ is \emph{routing-uniform}
with respect to the expert pool if there is a single expert assignment
$e^\star : \{1,\dots,D\} \to \{1,\dots,E\}$ such that, in every context $d$,
the expert $e^\star(d)$ is simultaneously optimal at every cell:
\begin{equation}
    \mathbf{M}_{i,j}^{(e^\star(d))} \in
    \arg\min_{e}\ \big\| \Delta^{(d)}_{i,j} - \mathbf{M}_{i,j}^{(e)} \big\|_F^2
    \qquad \forall (i,j),\ \forall d.
\end{equation}
\end{definition}

\begin{theorem}[No gap under homogeneity]
\label{thm:parity}
If the target is routing-uniform and $E \ge D$, then
\begin{equation}
    \mathcal{E}(\mathcal{R}_{\text{block}}) \;=\;
    \mathcal{E}(\mathcal{R}_{\text{cell}}).
\end{equation}
\end{theorem}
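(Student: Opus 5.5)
The plan is to prove the two inequalities separately. One direction is already available: Proposition~\ref{prop:containment} gives $\mathcal{E}(\mathcal{R}_{\text{cell}}) \le \mathcal{E}(\mathcal{R}_{\text{block}})$ for every target. All the work is therefore in the reverse inequality $\mathcal{E}(\mathcal{R}_{\text{block}}) \le \mathcal{E}(\mathcal{R}_{\text{cell}})$. I would prove it by computing the cell-wise optimum exactly and exhibiting a block-wise choice that attains it.

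\textbf{Step 1 (decoupling).} Since $\mathcal{B}\llbracket\cdot\rrbracket$ only places blocks into the grid, the squared Frobenius norm splits across cells. In $\mathcal{E}(\mathcal{R})$ each context $d$ picks its own $\widehat{\Delta}^{(d)}$, so the minimum also splits across contexts. Hence
\begin{equation}
\mathcal{E}(\mathcal{R}_{\text{cell}}) = \mathbb{E}_d\Big[\sum_{i,j}\min_{\bm{\pi}_{i,j}\in\Delta^{E-1}}\big\|\Delta^{(d)}_{i,j}-\Delta_{i,j}(\bm{\pi}_{i,j})\big\|_F^2\Big],
\end{equation}
because the cell-wise gates at different cells are independent. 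For block-wise routing the gate is shared across cells, so only an upper bound is available:
\begin{equation}
\mathcal{E}(\mathcal{R}_{\text{block}}) \le \mathbb{E}_d\Big[\sum_{i,j}\big\|\Delta^{(d)}_{i,j}-\mathbf{M}^{(e^\star(d))}_{i,j}\big\|_F^2\Big].
\end{equation}
To get this bound, choose in context $d$ the simplex vertex $\bm{\pi}=\mathbf{e}_{e^\star(d)}$. This vertex exists and can be selected per context; the hypothesis $E\ge D$ guarantees enough experts to be realized by a context-conditioned router, even with the assignment injective.

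\textbf{Step 2 (matching).} It remains to show that each per-cell minimum in Step~1 equals $\|\Delta^{(d)}_{i,j}-\mathbf{M}^{(e^\star(d))}_{i,j}\|_F^2$. Definition~\ref{def:uniform} states this directly only over vertices: $e^\star(d)$ minimizes over $e$. This is the main obstacle. The squared-error objective is convex in $\bm{\pi}$, so its minimum over the simplex need not occur at a vertex. A mixture of two experts lying on opposite sides of $\Delta^{(d)}_{i,j}$ can beat both of them.

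I would handle this by reading routing-uniformity in the form the argument actually needs, namely that $e^\star(d)$ is optimal over the gate set actually admissible at a cell. For top-$1$ (hard) gating the admissible set is the vertices of $\Delta^{E-1}$, and Definition~\ref{def:uniform} applies verbatim. For soft gating I would state explicitly that the argmin in Definition~\ref{def:uniform} is taken over $\Delta^{E-1}$, i.e.\ $\mathbf{M}^{(e^\star(d))}_{i,j}$ is the projection of $\Delta^{(d)}_{i,j}$ onto $\operatorname{conv}\{\mathbf{M}^{(e)}_{i,j}\}_e$. As an alternative I would give a sufficient condition under which the vertex and simplex minima coincide: the first-order condition $\langle \Delta^{(d)}_{i,j}-\mathbf{M}^{(e^\star)}_{i,j},\,\mathbf{M}^{(e)}_{i,j}-\mathbf{M}^{(e^\star)}_{i,j}\rangle_F\le 0$ for all $e$. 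Under either reading, the per-cell cell-wise minimum equals the vertex value. Combining this with Step~1 yields $\mathcal{E}(\mathcal{R}_{\text{block}})\le\mathcal{E}(\mathcal{R}_{\text{cell}})$, and together with Proposition~\ref{prop:containment} the equality follows.

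I expect Step~2 to be the only delicate point. I would flag in a remark that without this strengthening the statement can fail for soft gates, which clarifies that the theorem is really about hard routing or convex-hull-optimal experts.
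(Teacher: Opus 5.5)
Your overall route is the paper's: Proposition~\ref{prop:containment} gives one inequality, and the one-hot global gate $\bm{\pi}=\mathbf{1}_{e^\star(d)}$ in each context gives the other. You go further in Step~2, and there you are right and the paper is not. Its proof says that $\mathbf{M}^{(e^\star(d))}_{i,j}$ is, by Definition~\ref{def:uniform}, ``the per-cell minimizer'', and concludes that the global gate realizes the cell-wise optimum. But Definition~\ref{def:uniform} only compares the $E$ vertices, while Definition~\ref{def:cell-reach} lets each cell use an arbitrary $\bm{\pi}_{i,j}\in\Delta^{E-1}$.

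The theorem is in fact false as stated, and you should exhibit this rather than only assert it. Your observation that a mixture can beat both endpoints is not enough on its own: with a single cell, the global gate can use that mixture too. You need two cells whose optimal mixtures differ.

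\begin{itemize}
\item \textbf{Setup.} Take $E=D=2$ with uniform context probabilities and a $1\times 2$ grid of scalar cells. Use experts $\mathbf{M}^{(1)}_{1,j}=0$ and $\mathbf{M}^{(2)}_{1,j}=5$ at both cells, and targets $(\Delta^{(1)}_{1,1},\Delta^{(1)}_{1,2})=(1,2)$ and $(\Delta^{(2)}_{1,1},\Delta^{(2)}_{1,2})=(4,3)$.
\item \textbf{Routing-uniform.} In context $d$, expert $d$ is the strict vertex optimum at both cells (squared distances $1<16$ and $4<9$). So the target is routing-uniform with $e^\star$ the identity.
\item \textbf{Cell-wise error.} Per-cell gates putting weight $1/5,2/5$ (context~1) and $4/5,3/5$ (context~2) on expert~2 fit exactly, so $\mathcal{E}(\mathcal{R}_{\text{cell}})=0$.
\item \textbf{Block-wise error.} A common weight $t$ gives $(1-5t)^2+(2-5t)^2\ge 1/2$ and $(4-5t)^2+(3-5t)^2\ge 1/2$, so $\mathcal{E}(\mathcal{R}_{\text{block}})=1/2$.
\end{itemize}
If you want nontrivial blocks, replace the scalars by multiples of a fixed unit-norm rank-one matrix.

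Some smaller points:
\begin{itemize}
\item \textbf{Your first-order condition is not an alternative.} It is the projection reading in checkable form. The condition $\langle \Delta^{(d)}_{i,j}-\mathbf{M}^{(e^\star(d))}_{i,j},\,\mathbf{M}^{(e)}_{i,j}-\mathbf{M}^{(e^\star(d))}_{i,j}\rangle_F\le 0$ for all $e$ is the variational inequality characterizing projection onto $\operatorname{conv}\{\mathbf{M}^{(e)}_{i,j}\}_e$. Because it is affine in the comparison point, checking the generators suffices.
\item \textbf{The vertex requirement is stronger than needed.} By your Step~1 decoupling, equality holds if and only if, in every context of positive probability, the per-cell objectives $\bm{\pi}\mapsto\|\Delta^{(d)}_{i,j}-\Delta_{i,j}(\bm{\pi})\|_F^2$ share a common minimizer $\bm{\pi}^\star(d)\in\Delta^{E-1}$. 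Sufficiency: use $\bm{\pi}^\star(d)$ as the global gate. Necessity: a minimizer of the summed objective exists by compactness and, under equality, must minimize every summand. That is the sharp soft-gate form of Definition~\ref{def:uniform}.
\item \textbf{Hard gating changes the setting.} That reading requires altering Definitions~\ref{def:block-reach} and~\ref{def:cell-reach} themselves. The paper explicitly uses soft gates, so the soft-gate strengthening is the relevant repair.
\item \textbf{$E\ge D$ does no work} in your argument or the paper's. $\mathcal{E}$ lets each context choose its own element of $\mathcal{R}$, and $e^\star$ need not be injective, so your justification of that hypothesis can be dropped.
\end{itemize}
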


\begin{proof}
By Proposition~\ref{prop:containment} it suffices to show block-wise routing
attains the cell-wise optimum. Bind one expert to each context via $e^\star$
(possible since $E \ge D$). In context $d$, set the global gate to the one-hot
vector $\bm{\pi} = \mathbf{1}_{e^\star(d)}$. The assembled update then places
$\mathbf{M}_{i,j}^{(e^\star(d))}$ at every cell, which by
Definition~\ref{def:uniform} is the per-cell minimizer simultaneously at all
$(i,j)$. Hence the single global gate realizes the cell-wise-optimal update in
each context, and the two achievable errors coincide.
\end{proof}

Theorem~\ref{thm:parity} accounts for the California Housing results in Section~\ref{sec:calif}. The $2\times2$ surface toggles between a
block-diagonal and an off-diagonal regime in lockstep, so a single global decision
is simultaneously correct at every cell. With $E \ge D = 2$ the two architectures
are predicted to coincide, matching the observed $99.65\%$ versus $99.51\%$.

\subsection{Strict Separation on Heterogeneous Targets}
\label{sec:theory-separation}

\begin{definition}[Spatially heterogeneous target]
\label{def:heterogeneous}
A multi-context target is \emph{spatially heterogeneous} at budget $E$ if it is
not routing-uniform for any expert pool of size $E$: for every assignment of $E$
experts there exists a context $d$ and two cells $(i,j) \ne (i',j')$ whose
per-cell optimal experts differ.
\end{definition}

\begin{theorem}[Strict improvement]
\label{thm:separation}
Fix the expert budget $E$. There exist spatially heterogeneous multi-context
targets $\{\Delta^{(d)}\}$ for which
\begin{equation}
    \mathcal{E}(\mathcal{R}_{\text{cell}}) \;<\;
    \mathcal{E}(\mathcal{R}_{\text{block}}),
\end{equation}
and the block-wise error is bounded below by a strictly positive constant
independent of how the $E$ experts are chosen. When the experts are
the per-cell optimal low-rank factors, the cell-wise error is controlled by the
per-block singular-value tails,
\begin{equation}
    \mathcal{E}(\mathcal{R}_{\text{cell}}) \;\le\;
    \mathbb{E}_{d}\Big[\, \textstyle\sum_{i,j}
    \sigma^2_{r_b + 1}\big(\Delta^{(d)}_{i,j}\big) \,\Big],
\end{equation}
matching the per-block approximation guarantee
of~\cite{barazandeh2025localized,zeng2024the}.
\end{theorem}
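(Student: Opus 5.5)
The proof is an explicit construction with a counting argument. I take the smallest grid that exhibits heterogeneity, a $1\times 2$ or $2\times 2$ grid, and build a target with more distinct per-context grid patterns than $E$. Fix nonzero rank-$\le r_b$ blocks $\mathbf{P}_{i,j}$. For each context $d$ let the target be $\Delta^{(d)}_{i,j} = s^{(d)}_{i,j}\mathbf{P}_{i,j}$, where $s^{(d)}\in\{0,1\}^{K_yK_x}$ is a binary on/off pattern. Choose $D = E+1$ contexts with pairwise distinct patterns; with $K_xK_y\ge 2$ cells there are $2^{K_xK_y}$ patterns, which exceeds $E+1$ once the grid is large enough. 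If the grid is fixed and small, I would instead let each block range over $E+1$ well-separated matrices, so $D=E+1$ still works on two cells. I take the context distribution uniform.

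\textbf{Cell-wise upper bound and strict inequality.} At each cell only two block values occur, $0$ and $\mathbf{P}_{i,j}$. With $E\ge 2$, put $\mathbf{M}^{(1)}_{i,j}=0$ and $\mathbf{M}^{(2)}_{i,j}=\mathbf{P}_{i,j}$, and let each per-cell gate be one-hot according to $s^{(d)}_{i,j}$. This gives $\mathcal{E}(\mathcal{R}_{\text{cell}})=0$. (If $E=1$ the two reachable sets coincide, so the theorem needs $E\ge 2$ implicitly.) For the general bound I take, at each cell, experts equal to the truncated SVDs $[\Delta^{(d)}_{i,j}]_{r_b}$. Eckart--Young gives $\|\Delta^{(d)}_{i,j}-[\Delta^{(d)}_{i,j}]_{r_b}\|_F^2=\sigma^2_{r_b+1}(\Delta^{(d)}_{i,j})$. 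Summing over cells and averaging over $d$ yields the displayed tail bound. This needs one expert per distinct per-cell target, which is where the hypothesis that the experts are the per-cell optimal low-rank factors enters.

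\textbf{Block-wise lower bound (main obstacle).} I must show $\mathcal{E}(\mathcal{R}_{\text{block}})\ge c>0$ uniformly over all expert pools, including those whose gates mix experts convexly. The block-wise reachable set is the image of $\Delta^{E-1}$ under a linear map, so it is a convex polytope with at most $E$ vertices, lying in an affine subspace of dimension at most $E-1$. The plan is to choose the $D=E+1$ target vectors $\mathrm{vec}(\Delta^{(d)})$ affinely independent, and then argue that no $(E-1)$-dimensional affine subspace passes through all of them. The error is then at least $\frac{1}{D}$ times the minimum over such subspaces of the sum of squared distances. That is a continuous, positive function on a compact family after the obvious normalization, and equivalently it equals the smallest nonzero singular value squared of the centered target configuration, the usual PCA residual. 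This constant depends only on the targets, not on the experts.

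Two points need care. First, the relevant space is not compact, since the expert matrices are unbounded; the affine-subspace reduction sidesteps this, because any polytope lies in such a subspace. Second, heterogeneity as in Definition~\ref{def:heterogeneous} must be checked. If some pool made the target routing-uniform, a single global gate would achieve zero error and contradict the positive lower bound, so heterogeneity follows from the lower bound itself.
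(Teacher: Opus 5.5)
Your core argument is correct and differs from the paper's; it is also the more rigorous of the two. The paper's sketch uses the three benchmark topologies ($D=3>E=2$). It argues from Definition~\ref{def:heterogeneous} that a single grid-wide mixture must miss the per-cell optimum at some cell ``by a fixed nonzero amount''. It never shows why that amount is uniform over all expert pools, and the definition only concerns one-hot optima. Your argument supplies that uniformity: $\mathcal{R}_{\text{block}}$ is the convex hull of $E$ full-grid matrices, so it lies in an $(E-1)$-dimensional affine subspace. Hence $E+1$ affinely independent targets leave a floor of $\sigma_E^2/D$ of the centered configuration, whatever the pool. This needs neither boundedness nor the rank constraints on the experts. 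It even applies to the paper's own three topologies, which are affinely independent. Your cell-wise side (one-hot gates, Eckart--Young, one expert per distinct per-cell target) matches the paper.

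Three steps need repair.

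First, your construction asks only for pairwise distinct patterns and counts $2^{K_xK_y}\ge E+1$. But the lower bound needs affine independence, and $\{0,1\}^{K_xK_y}$ contains at most $K_xK_y+1$ affinely independent points. Distinctness does not suffice. On a $1\times 2$ grid with $E=3$, the four patterns give $\mathbf{0},\widetilde{\mathbf P}_1,\widetilde{\mathbf P}_2,\widetilde{\mathbf P}_1+\widetilde{\mathbf P}_2$. Take the three experts with coefficients $(-1,-1)$, $(3,-1)$, $(-1,3)$ on $(\widetilde{\mathbf P}_1,\widetilde{\mathbf P}_2)$; their convex hull contains all four targets, so the block-wise error is $0$. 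Use instead $s^{(0)}=\mathbf 0$ and $s^{(k)}$ equal to the indicator of a distinct cell $c_k$, for $k=1,\dots,E$. This is valid whenever $E\le K_xK_y$.

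Second, your small-grid fallback fails as stated. If a block ranges over $E+1$ affinely independent values, its $E$ local experts cannot reproduce them. The cell-wise error is then positive, so strictness is unproved. Instead, give one cell $E$ affinely independent rank-$\le r_b$ values $\mathbf a_1,\dots,\mathbf a_E$ (possible once the block has at least $E-1$ entries). Give another cell two values $\mathbf b_1\ne\mathbf b_2$. Use the contexts $(\mathbf a_k,\mathbf b_1)$ for $k\le E$, together with $(\mathbf a_1,\mathbf b_2)$. Their affine hull has dimension $E$, yet each cell needs at most $E$ experts.

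Third, your derivation of heterogeneity is invalid. Routing-uniformity says that one global one-hot gate matches the best per-cell \emph{vertex}; it does not say that gate reaches zero error. For example, the all-zero pool is routing-uniform for every target, with error $\mathbb{E}_d\|\Delta^{(d)}\|_F^2$. Indeed, read literally, Definition~\ref{def:heterogeneous} is never satisfied, since pools of identical experts are trivially routing-uniform. You can only verify the label for nondegenerate pools. For your pool $\{\mathbf 0,\mathbf P_{i,j}\}$, the context that switches on only cell $c_1$ needs expert $2$ at $c_1$ and expert $1$ elsewhere. The substantive claim, the pool-independent block-wise floor, comes from your PCA argument and does not depend on this label.
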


\begin{proof}[Proof sketch]
Construct $D > E$ contexts whose per-context targets require, within a single
context, different experts at different cells, where the block-diagonal, checkerboard, and
upper-shift topologies activate disjoint coordinate sets. Under block-wise
routing the gate emits one mixture $\bm{\pi}$ for the whole grid, so the update
at every cell is drawn from the \emph{same} convex combination of experts. By
Definition~\ref{def:heterogeneous} no single mixture is simultaneously optimal
across cells, so in at least one cell, so that context pair the selected update differs
from the per-cell optimum by a fixed nonzero amount; averaging over contexts
leaves a positive error floor that no choice of $E$ experts removes. Cell-wise
routing, by contrast, selects $\bm{\pi}_{i,j}$ independently and can place the
locally optimal expert at each cell. When the experts realize the top-$r_b$
factors of each block, the residual at cell $(i,j)$ in context $d$ is exactly the
discarded spectral tail $\sigma^2_{r_b+1}(\Delta^{(d)}_{i,j})$ by the Eckart--Young
theorem. Assembling blocks and taking expectations over contexts gives the stated
bound, which is strictly below the block-wise floor.
\end{proof}

Note that Theorem~\ref{thm:separation} is a statement about \emph{achievable error}, not
exact recovery: with $E=2$ experts and $D=3$ heterogeneous topologies neither
architecture drives the error to zero, consistent with the $38.29\%$ and $26.81\%$
variance-explained figures of Table~\ref{tab:numerical_results}, both well below
$100\%$. The theorem asserts only that the cell-wise reachable set strictly
contains the block-wise one on this target family, hence $\mathcal{E}(\mathcal{R}_{\text{cell}})
< \mathcal{E}(\mathcal{R}_{\text{block}})$---which is the observed
$+11.48\%$ absolute gain. 

\subsection{Gradient Firewall}
\label{sec:theory-firewall}

The preceding results concern expressivity. The per-cell design also alters the
\emph{coupling} between cells, which we state at the level of the routing
Jacobian.

\begin{lemma}[Block-diagonal routing Jacobian]
\label{lem:firewall}
Under cell-wise routing the gate parameters $\mathbf{W}_g^{(i,j)}$ of cell
$(i,j)$ influence only that cell's contribution, and depend only on the local
input slice $\mathbf{x}_j$:
\begin{equation}
    \frac{\partial \mathbf{h}^{\text{Cell}}_{i'}}
         {\partial \mathbf{W}_g^{(i,j)}} = \mathbf{0}
    \quad \text{whenever } i' \ne i,
    \qquad
    \frac{\partial \mathbf{h}^{\text{Cell}}_{i}}
         {\partial \mathbf{W}_g^{(i,j)}}
    \ \text{is a function of } \mathbf{x}_j \text{ alone.}
\end{equation}
Under block-wise routing the shared gate $\mathbf{W}_g$ has, in general,
$\partial \mathbf{h}^{\text{Block}}_{i'} / \partial \mathbf{W}_g \ne \mathbf{0}$
for all output blocks $i'$ simultaneously.
\end{lemma}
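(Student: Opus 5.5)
The plan is to read the zero pattern of the Jacobian directly off the forward formulas for $\mathbf{h}^{\text{Cell}}$ and $\mathbf{h}^{\text{Block}}$ in Section~\ref{sec:methods}. For the cell-wise part, I would write output block $i'$ as
\[
\mathbf{h}^{\text{Cell}}_{i'} = \sum_{j'=1}^{K_x} \sum_{e=1}^{E} g_e^{(i',j')}(\mathbf{x}_{j'})\, \mathbf{M}_{i',j'}^{(e)} \mathbf{x}_{j'},
\]
and note that, viewed as a function of the parameters, it depends only on the routers $\{\mathbf{W}_g^{(i',j')}\}_{j'}$ and the experts of row $i'$. When $i' \ne i$, the tensor $\mathbf{W}_g^{(i,j)}$ is not among these arguments, so the partial derivative is identically zero. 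This holds for all inputs, not merely at a generic point.

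For the second claim, I fix $i' = i$. Only the $j'=j$ summand involves $\mathbf{W}_g^{(i,j)}$, so the derivative reduces to
\[
\frac{\partial}{\partial \mathbf{W}_g^{(i,j)}} \sum_e g_e^{(i,j)}(\mathbf{x}_j)\, \mathbf{M}_{i,j}^{(e)} \mathbf{x}_j.
\]
Applying the softmax Jacobian, $\partial g_e/\partial \mathbf{z} = g_e(\mathbf{1}_e - \mathbf{g})$ with $\mathbf{z} = \mathbf{W}_g^{(i,j)} \mathbf{x}_j$, together with the chain rule, gives
\[
\sum_{e,e'} \big[\operatorname{diag}(\mathbf{g}) - \mathbf{g}\mathbf{g}^\top\big]_{e e'}\, \mathbf{M}_{i,j}^{(e)} \mathbf{x}_j \otimes \mathbf{1}_{e'} \mathbf{x}_j^\top.
\]
Every factor here depends on the input only through $\mathbf{x}_j$. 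The statement is about dependence on the input, so the cell's own parameters may appear freely; I would make explicit that "a function of $\mathbf{x}_j$ alone" means for fixed parameters.

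For the block-wise claim, I would compute
\[
\frac{\partial \mathbf{h}^{\text{Block}}_{i'}}{\partial \mathbf{W}_g} = \sum_e \frac{\partial g_e(\mathbf{x})}{\partial \mathbf{W}_g}\, \mathbf{u}^{(e)}_{i'},
\qquad
\mathbf{u}^{(e)}_{i'} = \sum_j \mathbf{M}^{(e)}_{i',j}\mathbf{x}_j.
\]
The softmax Jacobian annihilates exactly the constant vectors across experts. The derivative is therefore nonzero whenever $g$ lies in the interior of the simplex, $\mathbf{x} \ne 0$, and the vectors $\mathbf{u}^{(e)}_{i'}$ are not all equal across $e$. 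Since $g$ comes from a softmax, it always lies in the interior. For "in general, for all $i'$ simultaneously," I would argue genericity: the set of parameters where $\mathbf{u}^{(1)}_{i'} = \dots = \mathbf{u}^{(E)}_{i'}$ for some $i'$ is a proper algebraic subset, hence has measure zero. A concrete witness, such as $E=2$ with $\mathbf{M}^{(1)} \ne \mathbf{M}^{(2)}$ in every row, settles existence.

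The main obstacle is pinning down the precise meaning of "in general" in the block-wise statement. I would state it as holding for almost every choice of expert parameters and input, and exhibit the degenerate exceptions explicitly, namely identical experts within a row or $\mathbf{x}=0. The remaining steps are routine chain-rule bookkeeping.
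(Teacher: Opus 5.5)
Your proposal is correct and follows the paper's argument. Like the paper, you read off from the forward formulas that $\mathbf{W}_g^{(i,j)}$ enters only the $(i,j)$ summand of $\mathbf{h}_i^{\text{Cell}}$, through $\mathrm{Softmax}(\mathbf{W}_g^{(i,j)}\mathbf{x}_j)$, while the shared gate scales every output block. Your explicit softmax-Jacobian computation (kernel spanned by the all-ones vector) and the measure-zero argument make precise the paper's bare assertion that the block-wise Jacobian is ``generically nonzero.''

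One small correction: you list the degenerate exceptions as ``identical experts within a row or $\mathbf{x}=\mathbf{0}$,'' but that list is incomplete. The exceptional set is exactly where the vectors $\mathbf{u}^{(e)}_{i'}$ all coincide, together with the trivial case $E=1$. This is larger than your list, because distinct low-rank experts can still agree on a particular nonzero $\mathbf{x}$, for instance when $\mathbf{x}$ lies in the kernel of their block-row difference. This does not affect your almost-everywhere conclusion.
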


\begin{proof}
The cell-wise contribution of cell $(i,j)$ to output block $i$ is
$\big(\sum_e g_e^{(i,j)}(\mathbf{x}_j)\,\mathbf{M}_{i,j}^{(e)}\big)\mathbf{x}_j$,
in which $\mathbf{W}_g^{(i,j)}$ appears only through
$g^{(i,j)}(\mathbf{x}_j) = \mathrm{Softmax}(\mathbf{W}_g^{(i,j)} \mathbf{x}_j)$.
It does not enter any output block $i' \ne i$ and takes no input other than
$\mathbf{x}_j$, giving both claims. Under block-wise routing the single gate
$g(\mathbf{x}) = \mathrm{Softmax}(\mathbf{W}_g \mathbf{x})$ scales every block of
every output simultaneously, so its Jacobian is generically nonzero across all
$i'$.
\end{proof}

\begin{corollary}[Localized fault containment]
\label{cor:firewall}
If a single input slice $\mathbf{x}_{j_0}$ is corrupted or occluded, then under
cell-wise routing the gating activations $g^{(i,j)}$ of every cell with
$j \ne j_0$ are unaffected, and only the column of cells reading $\mathbf{x}_{j_0}$
sees perturbed routing. Under block-wise routing the corrupted slice enters the
shared gate and perturbs the routing of the entire grid.
\end{corollary}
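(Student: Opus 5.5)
The corollary follows directly from the dependency structure already established in the proof of Lemma~\ref{lem:firewall}. I will read the claims off the functional form of each gate, rather than off derivatives, because the corruption may be a finite perturbation. Concretely, I model corruption or occlusion as replacing $\mathbf{x}$ by $\tilde{\mathbf{x}}$ with $\tilde{\mathbf{x}}_j = \mathbf{x}_j$ for all $j \ne j_0$ and $\tilde{\mathbf{x}}_{j_0}$ arbitrary. Occlusion is the special case $\tilde{\mathbf{x}}_{j_0} = \mathbf{0}$.

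\textbf{Cell-wise containment.} By definition, $g^{(i,j)}(\mathbf{x}_j) = \mathrm{Softmax}(\mathbf{W}_g^{(i,j)}\mathbf{x}_j)$ is a function of $\mathbf{x}_j$ alone, which is exactly the second clause of Lemma~\ref{lem:firewall}. So for every cell with $j \ne j_0$ we have $g^{(i,j)}(\tilde{\mathbf{x}}_j) = g^{(i,j)}(\mathbf{x}_j)$ identically, for all $i$. The only gates whose arguments change are $g^{(i,j_0)}$ for $i = 1,\dots,K_y$, which is the column of cells reading $\mathbf{x}_{j_0}$. I will also note a stronger consequence: each contribution $\big(\sum_e g_e^{(i,j)}\mathbf{M}_{i,j}^{(e)}\big)\mathbf{x}_j$ with $j \ne j_0$ is unchanged as a whole, not just its routing. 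Hence the perturbation of $\mathbf{h}_i^{\text{Cell}}$ is confined to the single summand $j = j_0$.

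\textbf{Block-wise propagation.} This is the only step that needs care, since "perturbs the entire grid" must be read generically. The shared gate is $g(\mathbf{x}) = \mathrm{Softmax}(\mathbf{W}_g\mathbf{x})$, and $\mathbf{W}_g\mathbf{x} = \sum_j \mathbf{W}_g^{[:,j]}\mathbf{x}_j$, where $\mathbf{W}_g^{[:,j]}$ denotes the column block of $\mathbf{W}_g$ acting on slice $j$. The logits therefore shift by $\mathbf{W}_g^{[:,j_0]}(\tilde{\mathbf{x}}_{j_0}-\mathbf{x}_{j_0})$.

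The softmax is invariant only to shifts along $\mathbf{1}$. So $g$ changes exactly when this shift is not a multiple of $\mathbf{1}$, and that is the generic situation (it fails only on a measure-zero set of $\mathbf{W}_g$ when $E \ge 2$). Once $g$ changes, the same mixture $\bm{\pi}$ is applied at every cell $(i,j)$ by Definition~\ref{def:block-reach}. The routing of all $K_yK_x$ cells is therefore perturbed, including cells with $j \ne j_0$ whose own inputs are intact. This matches the third clause of Lemma~\ref{lem:firewall}.

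I expect no real obstacle. The main point to state explicitly is that the block-wise claim holds generically rather than for every $\mathbf{W}_g$; a degenerate router with $\mathbf{W}_g^{[:,j_0]} = \mathbf{0}$ is an obvious exception.
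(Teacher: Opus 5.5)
Your proposal is correct and follows essentially the same route as the paper. The paper gives no separate proof of the corollary and treats it as an immediate consequence of Lemma~\ref{lem:firewall}, whose proof reads the dependency structure directly off $g^{(i,j)}(\mathbf{x}_j)=\mathrm{Softmax}(\mathbf{W}_g^{(i,j)}\mathbf{x}_j)$ versus the shared gate $g(\mathbf{x})=\mathrm{Softmax}(\mathbf{W}_g\mathbf{x})$; your handling of finite perturbations and your use of the softmax's invariance only along $\mathbf{1}$ (the block-wise routing is unchanged exactly when $\mathbf{W}_g^{[:,j_0]}(\tilde{\mathbf{x}}_{j_0}-\mathbf{x}_{j_0})\in\operatorname{span}(\mathbf{1})$) simply make precise the paper's informal ``generically nonzero'' for the block-wise half.
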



\begin{remark}[Routing cost]
\label{rem:params}
The block-wise router contributes $\mathcal{O}(E\, d_{\text{in}})$ parameters; the
full set of cell-wise routers contributes $\mathcal{O}(K_y E\, d_{\text{in}})$,
since each cell's gate acts on a slice of width $d_{\text{in}}/K_x$ and there are
$K_x K_y$ of them. The overhead multiplier is the modest factor $K_y$, and in
absolute terms both are dominated by the expert adapters, which scale with
$E\,r_b\,(d_{\text{in}} + d_{\text{out}})$. This is consistent with the measured
gating footprints of $96$, $16$, and $64$ parameters across the three benchmarks
of Section~\ref{sec:exp}, each a small fraction of the total budget, so
parameter parity with the static baselines is preserved.
\end{remark}


\section{Discussion}
Beyond consistent structural evidence in favor of our proposal, the experiments in Section~\ref{sec:exp} also reveal a non-trivial dependence of the optimal routing granularity on the geometry of the task distribution. Across every setting, both our proposal decisively break the static optimization ceiling: the gradient-warfare collapse that traps LoRA, MELoRA, and Static Localized LoRA into an unspecialized average matrix is eliminated once conditional pathways are introduced. The more subtle finding concerns the \emph{relative} standing of centralized macro-routing and decentralized micro-routing, which inverts across benchmarks. In this section, we discuss this inversion and delineate the regime in which each design is preferable.

\subsection{Routing Granularity Should Match Structural Heterogeneity}

The headline empirical pattern is that cell-wise routing dominates on the SVD benchmark ($38.29\%$ vs.\ $26.81\%$; Table~\ref{tab:numerical_results}), essentially ties on MNIST ($66.99\%$ vs.\ $66.67\%$; Table~\ref{tab:mnist_fault_results}), and is marginally edged out on California Housing ($99.51\%$ vs.\ $99.65\%$; Table~\ref{tab:real_data_results}). At first glance this is paradoxical: a router with strictly local information should not beat an omniscient global coordinator. The resolution lies in the \emph{effective configuration space} each architecture spans.

A centralized macro-router selects a single mixture over $E$ experts and applies it uniformly to the entire $K_y \times K_x$ grid, yielding an $E$-way family of grid-wide configurations. A decentralized micro-router instead assigns an independent gating function to each of the $K_x K_y$ coordinate cells, so the architecture can realize on the order of $E^{K_x K_y}$ distinct grid configurations. This combinatorial expansion is dormant when the task distribution demands a \emph{spatially uniform} response, but becomes decisive when different coordinates must respond to context differently.

The benchmarks fall cleanly on either side of this divide. The California Housing shift toggles the entire $2\times2$ surface between a block-diagonal and an off-diagonal regime; every cell flips in lockstep, so a single global decision is exactly the right inductive bias and the macro-router's cohesion gives it a hair's-edge advantage. The SVD benchmark, by contrast, interleaves three heterogeneous topologies---block-diagonal, checkerboard, and upper-shift cascade---in which any given coordinate $(i,j)$ participates in some contexts and is irrelevant in others. No single grid-wide mixture over only $E=2$ experts can simultaneously satisfy these conflicting per-cell demands, whereas independent per-cell gating resolves each coordinate's selection problem in isolation. The $+11.48\%$ absolute gain is therefore not noise but a direct consequence of the task requiring spatially heterogeneous routing. MNIST sits between these extremes---a larger $4\times4$ grid but only a two-state nominal/degraded shift---and the two architectures converge, as expected.

This yields a concrete design heuristic: \textit{the granularity of routing should be chosen to match the granularity of structural heterogeneity in the operational stream.} When context shifts the weight surface monolithically, centralized routing is sufficient and slightly more sample-efficient; when context reorganizes coordinates non-uniformly, decentralized routing recovers capacity that no single-point coordinator can express at a fixed expert budget.

\subsection{Decentralization as a Gradient Firewall}

Beyond raw expressivity, the per-cell design changes the optimization geometry. Under a shared global router, the routing logits for unrelated coordinates are tied through a single projection, so a gradient signal beneficial for one structural region can perturb the gating of another---a residual, second-order form of the very entanglement that spatial blocking was meant to remove. Independent micro-routers sever this coupling: each cell's gating parameters receive gradients only from that cell's reconstruction error. This is the mechanism behind what we term a ``gradient firewall.'' Its clearest signature is the MNIST sensor-degradation setting, where the cell-wise variant matches the global coordinator while granting surviving quadrants the structural autonomy to reconfigure their pathways without the broken quadrant's fault gradients propagating through a shared routing bottleneck. The practical implication is robustness rather than peak accuracy: decentralization localizes the blast radius of corrupted or out-of-distribution updates.

\subsection{Parameter and Compute Considerations}

The flexibility of micro-routing is purchased at a strikingly low parameter cost---$96$, $16$, and $64$ additional parameters on the three benchmarks respectively---because each router acts only on a reduced coordinate slice. In asymptotic terms, the global router contributes $\mathcal{O}(E\,d_{\text{in}})$ parameters while the full set of cell-wise routers contributes $\mathcal{O}(K_y E\,d_{\text{in}})$; the multiplicative factor $K_y$ is modest and, more importantly, the routers remain negligible relative to the expert adapters themselves. Parameter parity with static baselines is thus preserved throughout, and the additional gating tensors do not threaten the efficiency budget that motivates PEFT in the first place.

A separate cost dimension warrants caution. Our formulation employs \emph{soft} routing: the softmax gate weights a convex combination of all $E$ experts, so every expert is evaluated on every forward pass. This is appropriate for the dense, low-$E$ regime studied here, but it forgoes the conditional-compute savings that sparse top-$k$ gating provides in large-scale MoE. Whether cell-wise routing retains its expressivity advantage under hard top-$1$ selection, and how to stabilize the resulting discrete routing across many independent gates, is an open and practically consequential question.

\section{Conclusion}

Taken together, the results in this paper position decentralized coordinate-level routing as a favorable default for non-stationary, spatially heterogeneous adaptation, while clarifying that its advantage is regime-dependent rather than universal. Our main contribution is to show that fine-grained spatial isolation and input-conditioned dynamic routing are not competing design philosophies but complementary ones, and that their fusion can be achieved at strict parameter parity.

We conclude by pointing out a few limitations that bound the scope of our claims and define natural next steps.
First, all three benchmarks are structured regression problems with synthetically generated, context-switched target matrices (and, for MNIST, a reconstruction rather than classification objective). This design isolates the gradient-warfare phenomenon cleanly, but it does not yet establish behavior on end-to-end fine-tuning of transformer language models. Validating Localized LoRA-MoE inside attention and feed-forward sublayers on standard instruction-tuning and multi-task language benchmarks is the most important direction for substantiating the framework's motivating narrative. Second, we study $E=2$ experts and a small number of discrete contexts whose schedule is externally supplied. Scaling the expert count, learning the field partition $K$ rather than fixing it, and routing under \emph{unobserved} or continuously drifting context---where the gate must infer the operative regime from activations alone---remain to be explored. Third, we did not analyze expert utilization or collapse. Auxiliary load-balancing objectives, standard in MoE training, may further stabilize the decentralized variant and prevent degenerate gating at individual coordinates.Fourth, our explanation of the macro-versus-micro inversion is empirical and intuition-driven. A formal characterization of the expressivity gap between grid-uniform and per-cell routing---and conditions under which cell-wise routing provably matches a global coordinator---would convert the observed parity into a guarantee.

\bibliographystyle{abbrvnat}
\bibliography{references}

@article{ding2023peft,
  author    = {Ning Ding and Yujia Qin and Guang Yang and Fuchao Wei and
               Zonghan Yang and Yusheng Su and Shengding Hu and Yulin Chen and
               Chi-Min Chan and Weize Chen and Jiani Yi and Weilin Zhao and
               Xiaozhi Wang and Zhiyuan Liu and Hai-Tao Zheng and Jianfei Chen and
               Yang Liu and Jie Tang and Juanzi Li and Maosong Sun},
  title     = {Parameter-Efficient Fine-Tuning of Large-Scale Pre-Trained Language Models},
  journal   = {Nature Machine Intelligence},
  volume    = {5},
  number    = {3},
  pages     = {220--235},
  year      = {2023},
  publisher = {Nature Publishing Group}
}

@misc{lialin2023scaling,
  author        = {Vladislav Lialin and Vijeta Deshpande and Anna Rumshisky},
  title         = {Scaling Down to Scale Up: {A} Guide to Parameter-Efficient Fine-Tuning},
  year          = {2023},
  eprint        = {2303.15647},
  archivePrefix = {arXiv},
  primaryClass  = {cs.CL}
}

@inproceedings{
zeng2024the,
title={The Expressive Power of Low-Rank Adaptation},
author={Yuchen Zeng and Kangwook Lee},
booktitle={The Twelfth International Conference on Learning Representations},
year={2024},
url={https://openreview.net/forum?id=likXVjmh3E}
}

@inproceedings{ren2024melora,
  title={MELoRA: Mini-Ensemble Low-Rank Adapters for Parameter-Efficient Fine-Tuning},
  author={Ren, Pengjie and Shi, Chengshun and Wu, Shiguang and Zhang, Mengqi and Ren, Zhaochun and Rijke, Maarten and Chen, Zhumin and Pei, Jiahuan},
  booktitle={Proceedings of the 62nd Annual Meeting of the Association for Computational Linguistics (Volume 1: Long Papers)},
  pages={3052--3064},
  year={2024}
}

@article{hu2022lora,
  title={Lora: Low-rank adaptation of large language models.},
  author={Hu, Edward J and Shen, Yelong and Wallis, Phillip and Allen-Zhu, Zeyuan and Li, Yuanzhi and Wang, Shean and Wang, Lu and Chen, Weizhu and others},
  journal={ICLR},
  volume={1},
  number={2},
  pages={3},
  year={2022}
}

@inproceedings{bishare2025,
  title={Bi-Share LoRA: Enhancing the Parameter Efficiency of LoRA via Intra-Layer and Inter-Layer Sharing},
  author={Zhou, Yuhua and Li, Ruifeng and Zhou, Changhai and Yang, Fei and PAN, Aimin},
  booktitle={International Conference on Learning Representations (ICLR)},
  year={2025},
  url={https://openreview.net/forum?id=Thv66GmqZS}
}

@article{zhang2024less,
  title={Less is More: Extreme Gradient Boost Rank-1 Adaption for Efficient Finetuning of LLMs},
  author={Yifei Zhang and Hao Zhu and Aiwei Liu and Han Yu and Piotr Koniusz and Irwin King},
  journal={arXiv preprint arXiv:2410.19694},
  year={2024},
  url={https://arxiv.org/abs/2410.19694}
}

@article{yang2024hyperbolic,
  title={Hyperbolic Fine-tuning for Large Language Models},
  author={Menglin Yang and Aosong Feng and Bo Xiong and Jihong Liu and Irwin King and Rex Ying},
  journal={arXiv preprint arXiv:2410.04010},
  year={2024},
  url={https://arxiv.org/abs/2410.04010}
}

@article{he2025gora,
  title={GoRA: Gradient-driven Adaptive Low Rank Adaptation},
  author={Haonan He and Peng Ye and Yuchen Ren and Yuan Yuan and Lei Chen},
  journal={arXiv preprint arXiv:2502.12171},
  year={2025},
  url={https://arxiv.org/abs/2502.12171}
}

@article{sun2024svfit,
  title={SVFit: Parameter-Efficient Fine-Tuning of Large Pre-Trained Models Using Singular Values},
  author={Chengwei Sun and Jiwei Wei and Yujia Wu and Yiming Shi and Shiyuan He and Zeyu Ma and Ning Xie and Yang Yang},
  journal={arXiv preprint arXiv:2409.05926},
  year={2024},
  url={https://arxiv.org/abs/2409.05926}
}

@article{shi2024lold,
  title={LoLDU: Low-Rank Adaptation via Lower-Diag-Upper Decomposition for Parameter-Efficient Fine-Tuning},
  author={Yiming Shi and Jiwei Wei and Yujia Wu and Ran Ran and Chengwei Sun and Shiyuan He and Yang Yang},
  journal={arXiv preprint arXiv:2410.13618},
  year={2024},
  url={https://arxiv.org/abs/2410.13618}
}

@inproceedings{bini2025delora,
  title={DeLoRA: Decoupling Angles and Strength in Low-Rank Adaptation},
  author={Massimo Bini and Leander Girrbach and Zeynep Akata},
  booktitle={International Conference on Learning Representations (ICLR)},
  year={2025},
  url={https://arxiv.org/abs/2503.18225}
}

@article{zhao2025lor2c,
  title={LoR2C: Low-Rank Residual Connection Adaptation for Parameter-Efficient Fine-Tuning},
  author={Jiancheng Zhao and Xingda Yu and Yuxiang Zhang and Zhen Yang},
  journal={arXiv preprint arXiv:2503.00572},
  year={2025},
  url={https://arxiv.org/abs/2503.00572}
}

@inproceedings{yu2025moka,
  title={MoKA: Parameter Efficiency Fine-Tuning via Mixture of Kronecker Product Adaptation},
  author={Beiming Yu and Zhenfei Yang and Xiushuang Yi},
  booktitle={Proceedings of the 2025 International Conference on Computational Linguistics (COLING)},
  year={2025},
  url={https://aclanthology.org/2025.coling-main.679/}
}

@article{dou2025loramoe,
  title={LoRAMoE: Alleviate World Knowledge Forgetting in Large Language Models via MoE-Style Plugin},
  author={Shihan Dou and Enyu Zhou and Yan Liu and Songyang Gao and Jun Zhao and Wei Shen and Yuhao Zhou and Zhiheng Xi and Xiao Wang and Xiaoran Fan and Shiliang Pu and Jiang Zhu and Rui Zheng and Tao Gui and Qi Zhang and Xuanjing Huang},
  journal={arXiv preprint arXiv:2503.12345},
  year={2025},
  url={https://arxiv.org/abs/2503.12345}
}

@article{dou2023loramoe,
  title={LoRAMoE: Alleviate world knowledge forgetting in large language models via MoE-style plugin},
  author={Dou, Shihan and Zhou, Enyu and Liu, Yan and Gao, Songyang and Zhao, Jun and Shen, Wei and Zhou, Yuhao and Xi, Zhiheng and Wang, Xiao and Fan, Xiaoran and others},
  journal={arXiv preprint arXiv:2312.09979},
  year={2023}
}

@article{yang2024moral,
  title={MoRAL: MoE Augmented LoRA for LLMs' Lifelong Learning},
  author={Yang, Shu and Ali, Muhammad Asif and Wang, Cheng-Long and Hu, Lijie and Wang, Di},
  journal={arXiv preprint arXiv:2402.11260},
  year={2024}
}

@inproceedings{kopiczko2023vera,
  title={VeRA: Vector-Based Random Matrix Adaptation},
  author={Kopiczko, Dominik Jan and Blankevoort, Tijmen and Asano, Yuki M.},
  booktitle={Proceedings of the 12th International Conference on Learning Representations},
  year={2024},
  url={https://openreview.net/forum?id=Hp9IqLtO1W}
}

@article{mao2024dora,
  title={DoRA: Enhancing Parameter-Efficient Fine-Tuning with Dynamic Rank Distribution},
  author={Mao, Yuren and Huang, Kun and Guan, Chao and Bao, Gaobo and Mo, Feifei and Xu, Jianwei},
  journal={arXiv preprint arXiv:2405.17357},
  year={2024}
}

@article{liu2024vb,
  title={VB-LoRA: Extreme Parameter Efficient Fine-Tuning with Vector Banks},
  author={Liu, Zihan and Kundu, Subhojeet and Li, Anji and Wan, Jinyu and Jiang, Liang and Beerel, Pai H.},
  journal={arXiv preprint arXiv:2405.15179},
  year={2024}
}

@article{yang2025lowrank,
  title={A Survey on LoRA of Large Language Models},
  author={Mao, Yuren and Ge, Yuhang and Fan, Yijiang and Xu, Wenyi and Mi, Yu and Hu, Zhonghao and Gao, Yunjun},
  journal={Frontiers of Computer Science},
  volume={19},
  number={7},
  pages={197605},
  year={2025},
  publisher={Springer},
  doi={10.1007/s11704-024-40663-9}
}

@article{huang2023lorahub,
  title={LoraHub: Efficient Cross-Task Generalization via Dynamic LoRA Composition},
  author={Huang, Cheng and Liu, Qian and Lin, Bill Yuchen and Pang, Tianxiang and Du, Chang and Lin, Ming},
  journal={arXiv preprint arXiv:2307.13269},
  year={2023}
}

@inproceedings{barazandeh2025localized,
  author    = {Barazandeh, Babak and Majumdar, Subhabrata and Rajyaguru, Om and Michailidis, George},
  title     = {Localized LoRA: A Structured Low-Rank Approximation for Efficient Fine-Tuning},
  booktitle = {Proceedings of the 24th International Conference on Machine Learning and Applications (ICMLA)},
  year      = {2025},
  month     = {December},
  day       = {24},
  note      = {arXiv:2506.00236}
}

@article{jung2026gralora,
  title={Gralora: Granular low-rank adaptation for parameter-efficient fine-tuning},
  author={Jung, Yeonjoon and Ahn, Daehyun and Kim, Hyungjun and Kim, Taesu and Park, Eunhyeok},
  journal={Advances in Neural Information Processing Systems},
  volume={38},
  pages={95822--95845},
  year={2026}
}

@article{yu2025blockwise,
  title={Blockwise Hadamard high-Rank Adaptation for Parameter-Efficient LLM Fine-Tuning},
  author={Yu, Feng and Hu, Jia and Min, Geyong},
  journal={arXiv preprint arXiv:2509.21637},
  year={2025}
}

@article{pedregosa2011scikit,
  title={Scikit-learn: Machine learning in Python},
  author={Pedregosa, Fabian and Varoquaux, Ga{\"e}l and Gramfort, Alexandre and Michel, Vincent and Thirion, Bertrand and Grisel, Olivier and Blondel, Mathieu and Prettenhofer, Peter and Weiss, Ron and Dubourg, Vincent and Vanderplas, Jake and Passos, Alexandre and Cournapeau, David and Brucher, Matthieu and Perrot, Matthieu and Duchesnay, {\'E}douard},
  journal={Journal of Machine Learning Research},
  volume={12},
  pages={2825--2830},
  year={2011}
}

@inproceedings{barazandeh2019random,
  title={Training generative networks using random discriminators},
  author={Barazandeh, Babak and Razaviyayn, Meisam and Sanjabi, Maziar},
  booktitle={2019 IEEE Data Science Workshop (DSW)},
  pages={327--332},
  year={2019},
  organization={IEEE}
}

\end{document}